\documentclass[hidelinks,letter, 10 pt, conference,doublecolomn]{ieeeconf} 
\IEEEoverridecommandlockouts    
\usepackage{balance}
\usepackage{mathtools}

\title{\Large \bf  Stability Analysis of Geometric Control for a Canonical Class of Underactuated Aerial Vehicles with Spurious Forces}
\usepackage{mathrsfs} 
\usepackage{bm}
\usepackage{subfiles}
\usepackage{graphicx}
\usepackage{epstopdf}
\usepackage{float}
\usepackage{bm}

\usepackage{bm}
\usepackage{amsmath}
\usepackage{amssymb}

\newcommand{\vect}[1]{\mathbf{#1}}
\newcommand{\mat}[1]{\mathbf{#1}}

\newcommand{\diffs}[3]{\frac{\partial^2 #1}{
\ifx#2#3 
\partial #2^2
\else
\partial #2 \partial #3
\fi
}}

\newcommand{\av}{\vect{a}}

\newcommand{\dv}{\vect{d}}
\newcommand{\ev}{\vect{e}}

\newcommand{\fv}{\vect{f}}

\newcommand{\kv}{\vect{k}}

\newcommand{\pv}{\vect{p}}

\newcommand{\rv}{{\vect{r}}}

\newcommand{\uv}{\vect{u}}
\newcommand{\vv}{\vect{v}}

\newcommand{\wv}{\vect{w}}

\newcommand{\xv}{\vect{x}}

\newcommand{\yv}{\vect{y}}

\newcommand{\zv}{\vect{z}}

\newcommand{\SO}{\mathbf{SO}(3)}
\newcommand{\SE}{\mathbf{SE}(3)}

\newcommand{\Omegav}{\bm{\Omega}}

\newcommand{\rank}{\operatorname{rank}}

\newcommand{\tauv}{\bm{\tau}}

\newcommand{\omegav}{\bm{\omega}}

\newcommand{\Am}{\mat{A}}
\newcommand{\Bm}{\mat{B}}
\newcommand{\Cm}{\mat{C}}

\newcommand{\Dm}{\mat{D}}

\newcommand{\Jm}{\mat{J}}

\newcommand{\Km}{\mat{K}}
\newcommand{\Mm}{\mat{M}}

\newcommand{\Pm}{\mat{P}}

\newcommand{\Rm}{\mat{R}}

\newcommand{\Wm}{\mat{W}}
\newcommand{\Xm}{\mat{X}}

\usepackage{svg}
\usepackage{hyperref}
\usepackage{academicons}
\usepackage{xcolor}

\usepackage{amsthm}
\theoremstyle{plain}

\newtheorem{prop}{Proposition}
\newtheorem{thm}{Theorem}
\newtheorem{pf}{Proof}
\newtheorem{lem}{Lemma}[section]
\newtheorem{defn}{Definition}
\newtheorem{problem}{Problem}

\usepackage{comment}
\usepackage[font=small]{caption}
\usepackage{subcaption}
\usepackage{calc}
\newtheorem{rem}{\textbf{Remark}}[section]
\usepackage{mathtools} 
 
\usepackage{mathptmx}
\usepackage{orcidlink}
 
\usepackage{tikz}
\usepackage{tikz-3dplot}
\usetikzlibrary{arrows.meta}
 
 \usepackage{tabularx}
\author{Simone Orelli$^{2, \orcidlink{0009-0007-6613-8597}}$, Mirko Mizzoni$^{1,\orcidlink{0009-0006-2165-3475}}$,\IEEEmembership{Student Member, IEEE} and Antonio Franchi$^{1,2,\orcidlink{0000-0002-5670-1282}}$,\IEEEmembership{Fellow, IEEE}
\thanks{$^1$Robotics and Mechatronics group, Faculty of Electrical Engineering,  Mathematics, and Computer Science (EEMCS), University of Twente, 7500 AE Enschede, The Netherlands. {\footnotesize \tt m.mizzoni@utwente.nl}, {\footnotesize } {\footnotesize \tt schol@r-franchi.eu}}
\thanks{$^2$Department of Computer, Control and Management Engineering, Sapienza University of Rome, 00185 Rome, Italy, {\footnotesize \tt s.orelli@uniroma1.it, schol@r-franchi.eu } {\footnotesize \tt }}\thanks{This work was partially funded by the Horizon Europe research agreement no. 101120732 (AUTOASSESS).}}

\begin{document}
\newif\ifappendix
\appendixtrue 
 
\newcommand{\af}[1]{\textcolor{blue}{\textbf{AF:} #1}}
 
\maketitle


\begin{abstract}
Standard geometric control relies on force--moment decoupling, an assumption that breaks down in many aerial platforms due to \emph{spurious forces} naturally induced by control moments. While strategies for such coupled systems have been validated experimentally, a rigorous theoretical certification of their stability is currently missing. This work fills this gap by providing the first formal stability analysis for a generic class of floating rigid bodies subject to spurious forces. We introduce a canonical model and construct a Lyapunov-based proof establishing local exponential stability of the hovering equilibrium. Crucially, the analysis explicitly addresses the structural challenges--specifically the induced non-minimum-phase behavior--that prevent the application of standard cascade arguments.
\end{abstract}

\section{Introduction}
Unmanned Aerial Vehicles (UAVs) are becoming increasingly pervasive in everyday life, thanks to their versatility, agility, and rapidly decreasing cost. They are now routinely employed in a wide range of applications, including surveillance, aerial photography, inspection of civil infrastructures, search-and-rescue operations, and aerial manipulation~\cite{2018-KimMokBen,2016-RaoGopMai,2020-HayYanBetBro,2019e-TogTelGasSabBicMalLanSanRevCorFra}.  
The growing deployment of UAVs in safety-critical and autonomous missions has consequently intensified the demand for reliable and high-performance control strategies.

Over the years, several control paradigms have been proposed to address these objectives. Classical nonlinear control techniques, including dynamic feedback linearization and backstepping, have been extensively investigated for UAV stabilization and trajectory tracking~\cite{2022-MohLawJiaLyn,2006-MadBen,2007-BouSie}. More recently, geometric control laws defined directly on nonlinear manifolds have attracted considerable attention due to their coordinate-free formulation and rigorous stability guarantees~\cite{2020-LiuHudSunYu,2019-HeWanLiu,2010-LeeLeoMcc,2013-LeeSreKum,2015-GooLeeLee}. 

A key property exploited by most existing UAV control laws is the intrinsic structural decoupling between translational and rotational dynamics. Under nominal conditions, control moments affect the vehicle attitude, while translational motion is governed by the thrust force expressed in the inertial frame through the vehicle orientation. This separation greatly simplifies both control design and stability analysis and underpins the effectiveness of many state-of-the-art controllers. However, this structural property is not guaranteed to hold in all scenarios. When the decoupling between rotational and translational dynamics is compromised, additional force components appear in the translational dynamics. These \emph{spurious forces} arise as a direct consequence of allocating control moments and are therefore intrinsically linked to the system dynamics, rather than being external disturbances or modeling uncertainties.

\begin{figure}[htbp]
    \centering

    \tdplotsetmaincoords{65}{-25} 

    \begin{tikzpicture}[tdplot_main_coords, scale=1.1]

        \tikzset{
            axisW/.style={
                -{Latex[length=3pt,width=3pt]},
                line width=0.6pt
            },
            axisB/.style={
                -{Latex[length=3pt,width=3pt]},
                line width=0.6pt
            },
            body/.style={
                fill=black!5,
                draw=black!70
            },
            cleanForce/.style={
                -{Latex[length=4pt,width=4pt]},
                line width=0.9pt
            },
            spuriousForce/.style={
                -{Latex[length=4pt,width=4pt]},
                line width=0.9pt,
                red!70!black
            },
            totalForce/.style={
                -{Latex[length=5pt,width=5pt]},
                line width=1.1pt
            },
            helperLine/.style={
                dashed,
                line width=0.4pt,
                gray!60
            },
            frameLabel/.style={
                font=\scriptsize
            },
            vecLabel/.style={
                font=\scriptsize,
                inner sep=1pt,
                fill=white,
                draw=none
            }
        }

        \coordinate (B) at (2,0,-1); 

        \shade[body] (B) ellipse [x radius=1.1, y radius=0.55];

        \coordinate (Top) at (2,0,-0.65);


        \coordinate (W) at (-3.5,-1,-0.25);
        \draw[axisW] (W) -- ++(1,0,0) node[below right,frameLabel] {$\mathbf{x}_W$};
        \draw[axisW] (W) -- ++(0,1,0) node[left,frameLabel] {$\mathbf{y}_W$};
        \draw[axisW] (W) -- ++(0,0,1) node[above,frameLabel] {$\mathbf{z}_W$};
        \node[frameLabel, below=1pt] at (W) {$\mathcal{F}_W$};

        \draw[helperLine,->] (W) -- (B)
            node[midway, vecLabel, yshift=-5pt] {$\pv$};

        \node[circle, inner sep=1pt, fill=black] at (B) {};

        \draw[axisB] (B) -- ++(0.9848,0,0.1736) node[ right,frameLabel] {$\mathbf{x}_B$};
        \draw[axisB] (B) -- ++(0,1,0)           node[left,frameLabel]        {$\mathbf{y}_B$};
        \draw[axisB] (B) -- ++(-0.1736,0,0.9848) node[left,frameLabel]      {$\mathbf{z}_B$};
        \node[frameLabel, below left=0pt] at (2,0,-0.9) {$\mathcal{F}_B$};


        \coordinate (dstarEnd) at (2,-1.6,2.3);
        \draw[helperLine] (B) -- (dstarEnd)
            node[midway, vecLabel, xshift=0pt, yshift=22pt] {$\mathbf d_\star$};

        \coordinate (fCleanEnd) at (2,-1.4,1);
        \draw[cleanForce] (B) -- (fCleanEnd)
            node[midway, vecLabel, xshift=-5pt, yshift=20pt] {$\Am \mathbf u_f$};

        \coordinate (fEnd) at (4,1.5,0.1);
        \draw[totalForce] (B) -- (fEnd)
            node[midway, vecLabel, xshift=26pt, yshift=20pt] {$\mathbf f$};

        \draw[spuriousForce] (fCleanEnd) -- (fEnd)
            node[midway, vecLabel, xshift = 5pt, yshift=12pt] {$\Bm\,\mathbf u_\tau$};

        \draw[->, thick]
            (1.3,-1,-1)
            to[bend left=35]
            (1,1.3,-1.2)
            node[midway, vecLabel, xshift=20pt, yshift=-30pt] {$\boldsymbol{\tau}$};

    \end{tikzpicture}

    \caption{Floating rigid body with frames $\mathcal{F}_W$ and $\mathcal{F}_B$, desired direction $\mathbf d_\star$, total force $\mathbf f$, and its decomposition into clean ($\Am \mathbf u_f$) and spurious ($\Bm \mathbf u_\tau$) components induced by the moment $\boldsymbol{\tau}$.}
    \label{fig:spurious_force}
\end{figure}
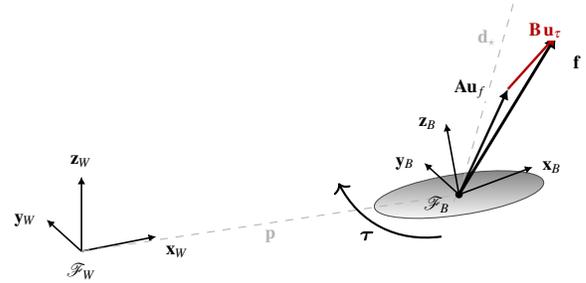 

Hovering plays a central role in this context. It may be required either deliberately, as an operational objective, or implicitly, as a necessary condition following the failure or degradation of one or more actuators. In such fault scenarios, the vehicle may lose its nominal decoupling properties, causing spurious forces to emerge even if the original platform was fully decoupled under healthy conditions. These situations pose a significant challenge in aerial robotics. The presence of intrinsic translational--rotational coupling fundamentally alters the system behavior and undermines key assumptions underlying conventional control designs. Developing control strategies capable of guaranteeing stable hovering and safe operation under such conditions remains an open and relevant problem for modern UAV systems.


Table~\ref{tab:comparison} summarizes the existing literature on floating aerial vehicles affected by force--moment coupling, highlighting the scope and limitations of current approaches.
Early contributions addressed this coupling through approximate or dynamic feedback linearization, where the coupling terms are explicitly retained in the model but mitigated via dynamic extensions of the control input, as originally proposed for VTOL aircraft and helicopter systems by Hauser and co-authors and by Koo and Sastry~\cite{1992-HauSasMey,1998-KooSas}. 
More recent works have instead adopted geometric control formulations that explicitly accommodate force--moment coupling. Among them, Zhong \emph{et al.}~\cite{2024-ZhoCaiMaDinFoo} established almost global exponential stability for trajectory tracking of quadrotors with tilted propellers, covering configurations with force allocation matrices of rank one, two, and three. However, the resulting analysis remains inherently restricted to four-rotor platforms and does not naturally extend to vehicles with different actuator topologies or to more general classes of floating rigid bodies exhibiting structural force--moment coupling.

\begin{table*}[t]
    \centering
    \caption{Comparison of prior work on floating vehicles affected by structural force--moment coupling. The proposed approach establishes Lyapunov-based local exponential stability of the hovering equilibriucm for a canonical model that abstracts the actuator topology.}\label{tab:comparison}
    \renewcommand{\arraystretch}{2}
    \begin{tabularx}{\textwidth}{l X l X X c}
        \hline
        \textbf{Work} & \textbf{Platform \& Validity} & \textbf{Task} & \textbf{Control Approach} & \textbf{Stability Result} & \textbf{Proof} \\
        \hline

        Martin \emph{et al.} \newline \cite{1994-MarDevPad}
        &
        \textbf{Restricted to:} \newline Planar VTOL (2D only)
        &
        Output tracking
        &
        Flatness-based: exact linearization + inversion
        &
        Local exponential state tracking
        &
        \textbf{Yes}
        \\
        \hline

        Koo \& Sastry \newline \cite{1998-KooSas}
        &
        \textbf{Restricted to:} \newline Standard Helicopter \newline (Main + Tail rotor)
        &
        Output tracking
        &
        Approximate I/O linearization (coupling neglected)
        &
        Bounded tracking error (Exact linearization is unstable)
        &
        \textbf{Yes}
        \\
        \hline

        Zhong \emph{et al.} \newline \cite{2024-ZhoCaiMaDinFoo}
        &
        \textbf{Restricted to:} \newline Quadrotors with tilted propellers ($4$ actuators)
        &
        Output tracking
        &
        Geometric $\SE$ controller
        &
        Almost global exponential tracking
        &
        \textbf{Yes}
        \\
        \hline
        
        Michieletto \emph{et al.} \newline \cite{2020e-MicCenZacFra}
        &
        \textbf{Generic:} \newline $N$-rotor platforms with arbitrary arrangement
        &
        Static hovering
        &
        Hierarchical nonlinear control 
        &
        Local asymptotic stability (excluding some configurations)
        &
        \textbf{Yes}
        \\
        \hline

        Michieletto \emph{et al.} \newline \cite{2017f-MicRylFra}
        &
        \textbf{Generic:} \newline $N$-rotor platforms with arbitrary arrangement
        &
        Static hovering
        &
        Geometric $\SE$ controller
        &
        \emph{Empirical validation only} (Simulations \& Exp.)
        &
        No
        \\
        \hline\hline

        \textbf{This Work}
        &
        \textbf{Abstract Canonical Model:} \newline Any underactuated vehicle with structural coupling
        &
        Static hovering
        &
        Geometric $\SE$ controller
        &
        \textbf{Local exponential stability of the hovering equilibrium}
        &
        \textbf{Yes}
        \\
        \hline
    \end{tabularx}
\end{table*}

Distinct from platform-specific solutions, the control framework proposed by Michieletto \emph{et al.}~\cite{2017f-MicRylFra} addresses a significantly broader class of systems, referred to as \emph{generically tilted multi-rotors}, admitting an arbitrary number of actuators and diverse geometric configurations. The objective of this line of work is to enable static hovering even in the presence of propeller failures, including configurations in which translational and rotational dynamics are intrinsically coupled. The proposed approach departs from the classical assumption that the control force is orthogonal to the plane containing the propeller centers and that control moments can be generated independently of the thrust force. Instead, the authors consider the more general case in which the direction of the control force is not fixed and control moments may induce translational effects. Within this framework, they introduce the concept of \emph{zero-moment direction} (also referred to as \emph{preferential direction}), defined as a virtual direction along which the intensity of the control force can be freely assigned while maintaining a zero net control moment. While this formulation provides a potentially unifying control paradigm for a wide range of multirotor configurations, its stability properties are supported primarily by heuristic arguments. Nevertheless, extensive experimental validation has demonstrated a remarkably large basin of attraction in fault scenarios. In particular, the controller was shown to successfully recover stability despite transient attitude errors approaching $50^\circ$, angular rates exceeding $50^\circ/\mathrm{s}$, and velocity errors above $1\,\mathrm{m/s}$~\cite{2018a-MicRylFra}. 

A subsequent work~\cite{2020e-MicCenZacFra} addresses a similar problem by providing sufficient conditions for local asymptotic stability of a hovering equilibrium within the same general modeling framework. However, the analysis introduces additional feasibility requirements on the actuation capabilities that effectively restrict the result to configurations in which  the translational component induced by moment allocation can be locally eliminated by an equivalent input parametrization. Therefore, the stability guarantees of~\cite{2020e-MicCenZacFra} do not apply to generic platforms with spurious forces. 

Therefore, despite the strong empirical evidence provided in~\cite{2018a-MicRylFra}, and the theoretical contribution provided in~\cite{2020e-MicCenZacFra}, the stability analysis of a controller applicable to the class of floating rigid bodies with spurious forces (such as the one presented in~\cite{2017f-MicRylFra}) is still an open problem.

Our contribution provides, for the first time, a rigorous Lyapunov-based stability proof for the geometric control law proposed by~\cite{2017f-MicRylFra}, extending its validity to the abstract canonical model presented in Section~\ref{sect:prel}. Our analysis relaxes the feasibility requirements~\cite{2020e-MicCenZacFra}, thereby being applicable to the class of floating vehicles with spurious forces. This result closes the gap between prior empirical evidence~\cite{2017f-MicRylFra} and theoretical certification. We prove that the controller guarantees local exponential stability of the hovering equilibrium. 
This work focuses exclusively on the theoretical certification, serving as the formal counterpart to the experimental studies of~\cite{2017f-MicRylFra}.


From the methodological point of view, the proposed analysis
is a non-trivial extension of the methodology presented in Lee~\emph{et al.}~\cite{2010-LeeLeoMcc,2013-LeeSreKum}. The presence of spurious forces fundamentally alters the system structure compared to~\cite{2010-LeeLeoMcc}. In particular, the induced non-minimum-phase behavior and intrinsic coupling violate the skew-symmetry and cascade properties on which the analysis of~\cite{2010-LeeLeoMcc} relies. To overcome these structural obstacles, we introduce a reformulated Lyapunov argument with new bounds specifically designed to handle the additional coupling terms. 

The remainder of the work is organized as follows. Section~\ref{sect:prel} introduces the generic canonical model and summarizes the control law. Section~\ref{sect:proof} details the stability analysis. Concluding remarks are provided in Section~\ref{sect:conclusions} and the Appendix~\ref{appendix} collects some of the  identities used throughout the work.

\section{Canonical Model and Actuation Properties}\label{sect:prel}
This section introduces the canonical model for aerial rigid bodies and recalls key actuation properties. For further details, the interested reader is referred to \cite{2017f-MicRylFra}.

\subsection{Canonical Model of Aerial Rigid Bodies}
We consider a rigid body moving freely in three-dimensional space, actuated by $n$ independent control inputs. Two right-handed coordinate frames are defined to describe its motion. The inertial frame $\mathcal{F}_W$ has origin $\mathcal{O}_W$ and orthonormal basis $\{\xv_W,\yv_W,\zv_W\}$. The body-fixed frame $\mathcal{F}_B$ is attached to the rigid body's center of mass (CoM) $\mathcal{O}_B$ and has basis $\{\xv_B,\yv_B,\zv_B\}$. The configuration of the rigid body is represented by the pair $(\pv,\Rm)\in \SE$, where $\Rm\in\SO$ denotes the rotation matrix describing the orientation of $\mathcal{F}_B$ relative to $\mathcal{F}_W$, and $\pv \in \mathbb{R}^3$ denotes the position of the CoM expressed in the inertial frame. The angular velocity of $\mathcal{F}_B$ relative to $\mathcal{F}_W$, expressed in $\mathcal{F}_B$, is denoted by $\Omegav\in \mathbb{R}^3$.

Using the Newton-Euler approach and neglecting the second order effects, the dynamics of the multi-rotor vehicle is approximated by the following set of equations: 
\begin{align}\label{eq:model}
    \begin{cases}
        \dot{\pv} &= \vv,\\
        m\dot{\vv} &= -mg\ev_3 + \Rm\big(\Am \uv_f + \Bm \uv_\tau\big),\\
        \dot{\Rm} &= \Rm\,\Omegav^\times,\\
        \Jm\dot{\Omegav} &= -\Omegav^\times\Jm\Omegav + \Cm \uv_{\tau},
    \end{cases}
\end{align}
where $\ev_3$ is the third column of the identity matrix in $\mathbb
{R}^{3\times 3}$, $g>0$ is the gravitational acceleration, $m>0$  is the platform mass, and $\Jm \in \mathbb{R}^{3\times 3}$ is the constant inertia matrix expressed in $\mathcal{F}_B$. The operator $(\cdot)^{\times}$ denotes the map that associates any non-zero vector in $\mathbb{R}^3$ to the related skew symmetric matrix in the special orthogonal Lie Algebra $\mathfrak{so}(3)$. 

The control inputs are $\uv_f \in \mathcal{U}_f \subseteq \mathbb{R}^{n_f}$ and $\uv_\tau \in \mathcal{U}_\tau \subseteq \mathbb{R}^{n_\tau}$, with $n_f + n_\tau = n$. The former regulates the body-frame control force, while the latter regulates the control moment and may, in general, induce an associated force component. The corresponding mappings from inputs to body-frame wrenches are described by the allocation matrices $\Am \in \mathbb{R}^{3\times n_f}$, and $\Bm ,\Cm \in \mathbb{R}^{3\times n_\tau}$. 

The model~\eqref{eq:model} can be rewritten in the compact form:
\begin{equation}\label{eq:system}
\dot{\xv} = \fv(\xv, \uv),
\end{equation}
whose state is defined by the tuple $\xv = (\pv, \vv, \Rm, \Omegav)\in \mathbb{R}^3 \times \mathbb{R}^3 \times \SO \times \mathbb{R}^3$.

\subsection{Actuation Properties}
We restrict attention to any aerial vehicles satisfying the two structural assumptions:
\begin{enumerate}
    \item Full moment authority:
    \begin{equation}\label{eq:rankC}
        \rank(\Cm)=3
    \end{equation}
    This condition, met whenever $n_\tau \ge 3$, ensures that every desired moment $\tauv \in \mathbb{R}^3$ is attainable via $\uv_\tau$.
    \item The matrix $\Am$ satisfies the full-rank condition, i.e., 
    \begin{equation}\label{eq:rankA}
        \rank(\Am)=\min\{3,\,n_f\}=\min\{3,\,n-n_\tau\}
    \end{equation}
\end{enumerate}
Define the \emph{moment–free force subspace} as $\text{Im}(\Am)$, whose dimension $\rank(\Am)$ measures how many independent force directions can be produced without affecting the attitude dynamics. In contrast, $\Bm\uv_\tau$ in \eqref{eq:model} represents the \emph{spurious force}, i.e., the component of the force that is unavoidably coupled with moment commands.

This leads to the following classification of platforms:
\begin{itemize}
    \item \textbf{Fully Decoupled (FD)} if $\text{Im}(\Bm)\subseteq \text{Im}(\Am)$, equivalently $\rank\!\left(\begin{bmatrix}\Am & \Bm\end{bmatrix}\right)=\rank(\Am)$. In this case, every force component generated by $\uv_\tau$ lies in $\text{Im}(\Am)$ and can be exactly compensated by a suitable choice of $\uv_f$.
    \item \textbf{Partially Coupled (PC)} if $\text{Im}(\Bm)\not\subseteq \text{Im}(\Am)$, equivalently $\rank\!\left(\begin{bmatrix}\Am & \Bm\end{bmatrix}\right)>\rank(\Am)$. In this case, $\uv_\tau$ generates at least one force component outside $\text{Im}(\Am)$ that cannot be compensated by any $\uv_f$.
\end{itemize}

Moreover, we say that the platform
\begin{itemize}
    \item has a \textbf{decoupled direction (D1)} if $\rank(\Am)\ge1$ (i.e., $n_f \ge 1$);
    \item has a \textbf{decoupled plane (D2)} if $\rank(\Am)\ge2$ (i.e., $n_f \ge 2$);
    \item is \textbf{fully actuated (FA)} if $\rank(\Am)=3$ (i.e., $n_f \ge 3$).
\end{itemize}

If the platform admits a single decoupled direction, we call it the \emph{preferential direction} and denote it by $\dv_\star \in \mathbb{S}^2$. 
Analogously, if the platform admits a decoupled plane, we call it the \emph{preferential plane} and denote it by $\Dm_\star \in \mathbb{R}^{3\times 2}$, whose columns are unit–norm and span a two–dimensional subspace of $\mathbb{R}^3$. 
Formally, the preferential plane is described by any matrix $\Dm_\star$ such that
\begin{equation}\label{eq:preferential_plane}
    \Dm_\star \Dm_\star^\top \;=\; \Am \Am^\dagger,
\end{equation}
where $\Am \Am^\dagger$ is the orthogonal projector onto $\text{Im}(\Am)$. See Fig.~\ref{fig:spurious_force} for a geometric illustration of the preferential direction, and spurious force.

In this work, we focus on platforms with 
\begin{equation}\label{eq:assumpt}
    n_f = 1,   \quad n_{\tau}=3,
\end{equation}
 i.e., systems that can generate a control force along a single moment–free direction. 
This setting corresponds to the minimally actuated case with $n = 4$ inputs  and represents the PC--D1 class.

This choice is not restrictive, but rather targets the most constrained scenario: any platform with higher force authority (e.g., $n_f \ge 2$) naturally admits a subset of directions satisfying the D1 condition. Consequently, the stability guarantees derived for this minimal configuration remain valid for any system possessing \emph{at least} one decoupled direction, simply by selecting $\dv_\star \in \text{Im}(\Am)$.
In this specific context, the preferential plane degenerates to the single preferential direction $\dv_\star$, and $\Am$ reduces to a single nonzero column $\av\in\mathbb{R}^3$. With $\Am^\dagger=\av^\top/\|\av\|^2$, \eqref{eq:preferential_plane} yields
\begin{equation}
    \dv_\star \dv_\star^\top \;=\; \Am \Am^\dagger \;=\; \av \av^\top/\|\av\|^2,
\end{equation}
which uniquely identifies the unit vector in the body–fixed frame:
\begin{equation}\label{eq:preferential_direction}
    \dv_\star \;=\; \av/\|\av\|.
\end{equation}

\subsection{Static Hovering}
We focus on \emph{statically hoverable} platforms, defined as systems whose position and attitude can be maintained at a constant equilibrium, i.e.,
\begin{equation}
    \vv = \Omegav = \mathbf{0},\quad 
    \begin{bmatrix}\Am & \Bm \end{bmatrix}\uv = mg\,\Rm^\top\ev_3,\quad
    \Cm \uv_\tau = \mathbf{0},
\end{equation}
which is in general guaranteed only if certain attitudes are realized by the platform (\emph{static hoverability realizability}~\cite{2017f-MicRylFra}).
A statically hoverable platform must satisfy: i) $n \ge 4$; ii) condition \eqref{eq:rankC}; and iii) $\rank(\Am)\ge 1$, meaning that there exists at least one direction along which the platform can generate thrust independently of the control moment. It is worth verifying that the PC--D1 case considered here satisfies the conditions in \eqref{eq:assumpt}.

\subsection{Control Law}\label{sec:control_law} 
We now briefly summarize the hierarchical control strategy proposed in~\cite{2017f-MicRylFra}, which consists of three layers: a position controller, an attitude controller, and a wrench mapper. The control objective is to regulate the platform to a constant reference position $\pv_r$ and a constant orientation $\Rm_r$.

The position controller computes a reference control force $\fv_r \in \mathbb{R}^3$ as:
\begin{equation}\label{eq:f_r}
    \fv_r = mg \ev_3 -\Km_p \ev_p - \Km_v \vv,
\end{equation}
where $\ev_p \coloneqq \pv - \pv_r$ is the position tracking error and \mbox{$\Km_p, \Km_v \in \mathbb{R}^{3\times3}$} are positive definite matrices.

To ensure that the thrust direction aligns with $\fv_r$, a desired orientation $\Rm_d$ is computed such that:
\begin{equation}
    \Rm_d \dv_\star = \fv_r/\|\fv_r\|.
\end{equation}
Specifically, we define $\Rm_d = \Rm_w \Rm_b$, where $\Rm_b$ rotates $\dv_\star$ to $\ev_3$, and $\Rm_w$ aligns the third axis with $\fv_r/\|\fv_r\|$. The first axis of $\Rm_w$ is the normalized projection of the heading direction ($\coloneqq \Rm_r \ev_1$) onto the plane orthogonal to $\fv_r$; the second axis completes the right-handed orthonormal frame. Consequently, \mbox{$\Rm_w \ev_3 = \Rm_d \dv_\star = \fv_r/\|\fv_r\|$}.

The attitude controller then computes a reference control moment $\tauv_r \in \mathbb{R}^3$ as:
\begin{equation}
    \tauv_r = \Omegav^\times\Jm\Omegav -\Km_R \ev_R - \Km_\Omega \Omegav,
\end{equation}
where $\ev_R \coloneqq \frac{1}{2}[\Rm_d^\top \Rm - \Rm^\top \Rm_d]_\vee$ is the attitude tracking error and $\Km_R, \Km_{\Omega} \in \mathbb{R}^{3\times3}$ are positive definite matrices.

Finally, the wrench mapper searches for the feasible control input that best reproduces the desired wrench $\left[\begin{smallmatrix} (\Rm \fv_r)^\top & \tauv_r^\top\end{smallmatrix}\right]^\top$. To achieve this, and considering~\eqref{eq:assumpt}, we set:
\begin{subequations}
\begin{align}
    \uv_\tau &= \Cm^{-1}\tauv_r, \label{eq:utau} \\
    u_f &= \arg \min_{\xi \in \mathbb{R}}\|\Rm \Am \xi - (\fv_r - \Rm \Bm \Cm^{\dagger}\tauv_r)\|^2. \label{eq:uf}
\end{align}
\end{subequations}
A closed-form solution is given by $u_f = \Am^\dagger (\Rm^\top \fv_r - \Bm \Cm^{-1}\tauv_r)$, which reduces to
\begin{equation}
    u_f = \frac{\av^\top \Rm^\top}{\|\Rm \av\|^2} (\fv_r - \Rm \Bm \Cm^{\dagger}\tauv_r).
\end{equation}

\section{Stability Analysis}
\label{sect:proof}
This section presents the main theoretical result, namely a Lyapunov-based proof of local exponential stability of the hovering equilibrium for the closed-loop error dynamics under the proposed geometric controller. The analysis explicitly accounts for the presence of spurious forces and derives verifiable conditions ensuring stability. Throughout the proof, without loss of generality, the gain matrices are assumed to be isotropic.

\begin{problem}\label{prob:main}
    Given the system~\eqref{eq:system},  and the control law~\eqref{eq:utau}--\eqref{eq:uf}, prove that the closed-loop system is locally exponential stable  to a suitable set where $\pv =\pv_r$, and $\vv$ and $\Omegav$ are both zero, while the orientation  $\Rm = \Rm_d$. 
\end{problem}

To analyze the stability of the proposed controller, we first derive the closed-loop error dynamics.
By differentiating the position and attitude errors, $\ev_p$ and $\ev_R$, and substituting the system dynamics~\eqref{eq:model} along with the control inputs defined in~\eqref{eq:utau}--\eqref{eq:uf}, we obtain the following closed-loop system:
\begin{equation}\label{eq:CL_stability3}
    \left\{\begin{split}
  \dot{\ev}_p &= \vv,\\[2pt]
  m\dot{\vv} &= -\Km_p \ev_p - \Km_v \vv + \Xm
                 \\&+ \Pm_{\star}^\perp \Rm \Bm \Cm^{-1}
                   \bigl(\Omegav^\times\Jm\Omegav
                   - \Km_R \ev_R - \Km_{\Omegav} \Omegav\bigr),\\[4pt]
  \dot{\ev}_R &=\Cm(\Rm_d^\top \Rm)  (\Omegav - \Rm^\top \Rm_d \Omegav_d)\\
  \Jm\dot{\Omegav} &= -\Km_R \ev_R - \Km_{\Omegav} \Omegav.
\end{split}\right.
\end{equation}
where we introduced the following definitions: $\Cm(\Rm_d^\top \Rm)\coloneqq\frac{1}{2} (\operatorname{Tr}[\Rm^\top \Rm_d]\mathbf{I} -\Rm^\top \Rm_d)$; the term $\Xm \coloneqq -\Pm_\star^\perp \fv_r = \|\fv_r\|(\dv_\star^\top \Rm_d^\top \Rm\dv_\star \Rm\dv_\star - \Rm_d\dv_\star)$, which quantifies the misalignment between the preferential direction and the desired thrust direction; and $\Pm_\star^\perp \coloneqq \mathbf{I}_3 - \Rm_d\dv_\star (\Rm_d\dv_\star)^\top$, which is the orthogonal projector onto the plane perpendicular to the desired thrust direction $\Rm_d\dv_\star$. Finally, the term $\Pm_\star^\perp \Rm \Bm \Cm^{-1}\tauv_r$ represents non-compensable spurious force arising from actuation coupling, and $\Omegav_d$ is defined such that $\Rm^\top_d\dot{\Rm}_d=\Omegav_d^\times$. See Appendix~\ref{appendix:e_R_dot} for details on the derivation of $\dot{\ev}_R$.


\subsection{Desired Angular Velocity}
    In this subsection, we derive a closed-form expression for the desired angular velocity $\Omegav_d$ as a function of the closed-loop error states and establish an explicit upper bound on $\|\Omegav_d\|$. These expressions are crucial for analyzing the closed-loop dynamics and proving stability.
    
    \begin{lem}\label{lem:omega_d}
        The vector $\Omegav_d$ admits the following closed-form expression:
    \begin{equation}
         \Omegav_d
         =\Rm_d^\top\!\left[\bm{I}_3+\frac{\hat{\rv}_1^\top \wv_3}{\|\wv_3\times\hat{\rv}_1\|^2}\,\wv_3\hat{\rv}_1^\top\right]
         (\wv_3\times\dot{\wv}_3),
         \label{eq:omega_d}
    \end{equation}
    where $\wv_3=\hat{\fv}_r$ denotes the third column of the rotation matrix $\Rm_w$, $\hat{\rv}_1 \coloneqq \Rm_r\ev_1$ is the constant heading direction, and \mbox{$\dot{\wv}_3 = \dot{\hat{\fv}}_r = \frac{1}{\|\fv_r\|}\left(\mathbf{I}_3 - \hat{\fv}_r \hat{\fv}_r^\top\right)\dot{\fv}_r$}, with $\dot{\fv}_r = -\Km_p \vv - \Km_v \dot{\vv}$.
    \end{lem}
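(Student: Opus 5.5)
Since $\Rm_d=\Rm_w\Rm_b$ with $\Rm_b$ constant (it only depends on the body-fixed $\dv_\star$), the plan is to reduce the computation of $\Omegav_d$ to the spatial angular velocity of the frame $\Rm_w$ and then identify that velocity column by column.

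\textbf{Step 1 (reduction to $\Rm_w$).} From $\Rm_d^\top\dot{\Rm}_d=\Omegav_d^\times$ and $\dot{\Rm}_d=\dot{\Rm}_w\Rm_b$ we get $\Omegav_d^\times=\Rm_b^\top(\Rm_w^\top\dot{\Rm}_w)\Rm_b$. Introduce the spatial angular velocity $\omegav_s$ through $\dot{\Rm}_w\Rm_w^\top=\omegav_s^\times$. The identity $(\Rm\xv)^\times=\Rm\xv^\times\Rm^\top$ then yields $\Omegav_d=\Rm_b^\top\Rm_w^\top\omegav_s=\Rm_d^\top\omegav_s$. This explains the leading factor $\Rm_d^\top$ in \eqref{eq:omega_d}, and it remains to show
\[
\omegav_s=\Bigl[\bm{I}_3+\tfrac{\hat{\rv}_1^\top\wv_3}{\|\wv_3\times\hat{\rv}_1\|^2}\wv_3\hat{\rv}_1^\top\Bigr](\wv_3\times\dot{\wv}_3).
\]
The expression for $\dot{\wv}_3$ is just the derivative of the normalization $\fv_r/\|\fv_r\|$, and $\dot{\fv}_r=-\Km_p\vv-\Km_v\dot{\vv}$ follows by differentiating \eqref{eq:f_r} with $\pv_r$ constant.

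\textbf{Step 2 (split $\omegav_s$ along $\wv_3$).} Each column $\wv_i$ of $\Rm_w$ satisfies $\dot{\wv}_i=\omegav_s\times\wv_i$. Write $\omegav_s=\omegav_\perp+\lambda\wv_3$ with $\omegav_\perp\perp\wv_3$. Since $\dot{\wv}_3=\omegav_\perp\times\wv_3$, taking the cross product with $\wv_3$ gives $\omegav_\perp=\wv_3\times\dot{\wv}_3$, which is the identity part of the formula. For the axial component,
\[
\lambda=\omegav_s^\top\wv_3=\omegav_s^\top(\wv_1\times\wv_2)=\wv_2^\top(\omegav_s\times\wv_1)=\wv_2^\top\dot{\wv}_1 .
\]

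\textbf{Step 3 (compute $\lambda$ from the heading construction; this is the main obstacle).} Here we must use the specific definition of $\wv_1$:
\[
\wv_1=\frac{\hat{\rv}_1-(\wv_3^\top\hat{\rv}_1)\wv_3}{n},\qquad n=\|\wv_3\times\hat{\rv}_1\|,\qquad \wv_2=\wv_3\times\wv_1 .
\]
Differentiate with $\hat{\rv}_1$ constant. The derivative of $1/n$ only produces a term parallel to $\wv_1$, which is orthogonal to $\wv_2$. The term $-(\dot{\wv}_3^\top\hat{\rv}_1)\wv_3/n$ is also orthogonal to $\wv_2$. Hence only one term survives:
\[
\lambda=-\frac{\wv_3^\top\hat{\rv}_1}{n}\,\wv_2^\top\dot{\wv}_3 .
\]
To match the stated form, note that $\hat{\rv}_1$ lies in the span of $\wv_1,\wv_3$, with $\hat{\rv}_1\times\wv_3=n\,\wv_1\times\wv_3=-n\wv_2$. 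Therefore
\[
\hat{\rv}_1^\top(\wv_3\times\dot{\wv}_3)=\dot{\wv}_3^\top(\hat{\rv}_1\times\wv_3)=-n\,\wv_2^\top\dot{\wv}_3 ,
\]
so that
\[
\lambda=\frac{\hat{\rv}_1^\top\wv_3}{n^2}\,\hat{\rv}_1^\top(\wv_3\times\dot{\wv}_3).
\]
Then $\lambda\wv_3$ is exactly the rank-one correction term. The points needing care are the sign conventions, and the standing requirement $n\neq0$: the heading must not be parallel to $\fv_r$, which holds locally near hover. We also need $\|\fv_r\|\neq0$, which holds near the equilibrium since $\fv_r\approx mg\ev_3$.

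\textbf{Step 4.} Combining Steps 1–3 gives \eqref{eq:omega_d}.
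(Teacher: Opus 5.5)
Your proposal is correct and follows the paper's proof essentially step by step. It uses the same split $\omegav_w=\wv_3\times\dot{\wv}_3+(\wv_3^\top\omegav_w)\wv_3$, the same triple-product identity $\wv_3^\top\omegav_w=\wv_2^\top\dot{\wv}_1$, and arrives at the same surviving term $-(\hat{\rv}_1^\top\wv_3)(\dot{\wv}_3^\top\wv_2)/\|\wv_3\times\hat{\rv}_1\|$ (you get there by differentiating the projection formula for $\wv_1$ directly rather than through $\dot{\wv}_2$). Your Step~1, which derives $\Omegav_d=\Rm_d^\top\omegav_w$ from the constancy of $\Rm_b$, spells out the passage from the spatial angular velocity of $\Rm_w$ to $\Omegav_d$ that the paper leaves implicit.
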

    
    
    \begin{pf}
        Consider the rotation matrix
        \begin{equation}
            \Rm_w = \begin{bmatrix} \wv_1 & \wv_2 & \wv_3 \end{bmatrix},
        \end{equation}
        where $\wv_3=\hat{\fv}_r$ denotes the desired force direction, and the vectors $\wv_1,\wv_2$ complete a right-handed orthonormal frame together with the constant heading direction $\hat{\rv}_1$, namely
        \begin{equation}
            \wv_2 = \frac{\wv_3 \times \hat{\rv}_1}{\|\wv_3 \times \hat{\rv}_1\|}, 
            \qquad 
            \wv_1 = \wv_2 \times \wv_3.
        \end{equation}
        
        The time derivative of $\Rm_w$ is given by
        \begin{equation}
            \dot{\Rm}_w = \begin{bmatrix} \dot{\wv}_1 & \dot{\wv}_2 & \dot{\wv}_3 \end{bmatrix},
        \end{equation}
        where, since $\hat{\rv}_1$ is constant,
        \begin{align}
            \dot{\wv}_3 &= \dot{\hat{\fv}}_r, \\
            \dot{\wv}_2 &= \frac{1}{\|\wv_3 \times \hat{\rv}_1\|}
            \left(\mathbf{I}_3 - \wv_2 \wv_2^\top\right)
            \left(\dot{\wv}_3 \times \hat{\rv}_1\right), \\
            \dot{\wv}_1 &= \dot{\wv}_2 \times \wv_3 + \wv_2 \times \dot{\wv}_3.
        \end{align}
        Here we used the standard identity for a unit vector $\hat{\kv}$,
        \begin{equation}
            \dot{\hat{\kv}} = \frac{1}{\|\kv\|}
            \left(\mathbf{I}_3 - \hat{\kv}\hat{\kv}^\top\right)\dot{\kv}.
        \end{equation}
        
        Since $\dot{\wv}_i = \omegav_w \times \wv_i$ for $i=1,2,3$, the angular velocity $\omegav_w$ admits the decomposition
        \begin{equation}
            \omegav_w
            = \underbrace{\wv_3 \times \dot{\wv}_3}_{\perp \wv_3}
            + \underbrace{(\wv_3^\top \omegav_w)\wv_3}_{\parallel \wv_3}.
        \end{equation}
        
        To compute the scalar component $\wv_3^\top \omegav_w$, we exploit the identity $\wv_3 = \wv_1 \times \wv_2$, which yields
        \begin{equation}
            \wv_3^\top \omegav_w
            = (\wv_1 \times \wv_2)^\top \omegav_w
            = \wv_2^\top (\omegav_w \times \wv_1)
            = \wv_2^\top \dot{\wv}_1.
        \end{equation}
        Substituting the expression of $\dot{\wv}_1$ and using orthogonality, we obtain
        \begin{align}
            \wv_2^\top \dot{\wv}_1
            &= \wv_2^\top (\dot{\wv}_2 \times \wv_3) \nonumber\\
            &= \frac{1}{\|\wv_3 \times \hat{\rv}_1\|}
            \,\wv_2^\top
            \Bigl[
            \bigl(\mathbf{I}_3 - \wv_2 \wv_2^\top\bigr)
            (\dot{\wv}_3 \times \hat{\rv}_1)
            \Bigr]
            \times \wv_3.
        \end{align}
        Applying the vector triple-product identity, this expression simplifies to
        \begin{equation}
            \wv_2^\top \dot{\wv}_1
            = \frac{(\dot{\wv}_3^\top \wv_3)(\hat{\rv}_1^\top \wv_2)
                  - (\hat{\rv}_1^\top \wv_3)(\dot{\wv}_3^\top \wv_2)}
                   {\|\wv_3 \times \hat{\rv}_1\|}.
        \end{equation}
        Since $\hat{\rv}_1^\top \wv_2 = 0$, it follows that
        \begin{equation}
            \wv_2^\top \dot{\wv}_1
            = -\frac{(\hat{\rv}_1^\top \wv_3)(\dot{\wv}_3^\top \wv_2)}
                   {\|\wv_3 \times \hat{\rv}_1\|}
            = \frac{(\hat{\rv}_1^\top \wv_3)
                   \bigl(\hat{\rv}_1^\top (\wv_3 \times \dot{\wv}_3)\bigr)}
                   {\|\wv_3 \times \hat{\rv}_1\|^2}.
        \end{equation}
        
        Substituting this result into the decomposition of $\omegav_w$ yields
        \begin{equation}\label{eq:omega_d_proof}
            \omegav_w
            = \wv_3 \times \dot{\wv}_3
            + \frac{(\hat{\rv}_1^\top \wv_3)\,
                    \hat{\rv}_1^\top(\wv_3 \times \dot{\wv}_3)}
                   {\|\wv_3 \times \hat{\rv}_1\|^2}\,\wv_3,
        \end{equation}
        from which the expression \eqref{eq:omega_d} directly follows.
    \end{pf}
    
    \begin{lem}\label{lem:omega_d_bound}
        Let $\delta\in(0,1)$ be a constant. For any system configuration such that the geometric condition $|\hat{\rv}_1^\top \wv_3|\le \delta$ holds, the norm of the desired angular velocity is bounded by:
        \begin{equation}\label{eq:omega_d_bound}
            \|\Omegav_d\|
            \le \left(1 + \frac{\delta}{1 - \delta^2}\right)\frac{\|\dot{\fv}_r\|}{\|\fv_r\|}.
        \end{equation}
    \end{lem}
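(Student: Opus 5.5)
Starting from the closed-form expression \eqref{eq:omega_d} of Lemma~\ref{lem:omega_d}, we bound the three factors separately and combine them by submultiplicativity. Since $\Rm_d\in\SO$, left multiplication by $\Rm_d^\top$ preserves the Euclidean norm. Hence
\begin{equation*}
\|\Omegav_d\|\le \Bigl\|\mathbf{I}_3+\tfrac{\hat{\rv}_1^\top\wv_3}{\|\wv_3\times\hat{\rv}_1\|^2}\wv_3\hat{\rv}_1^\top\Bigr\|\,\|\wv_3\times\dot{\wv}_3\|.
\end{equation*}

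\textbf{Bracketed matrix.} By the triangle inequality, its induced 2-norm is at most $1+\frac{|\hat{\rv}_1^\top\wv_3|}{\|\wv_3\times\hat{\rv}_1\|^2}\,\|\wv_3\hat{\rv}_1^\top\|$. The rank-one matrix satisfies $\|\wv_3\hat{\rv}_1^\top\|=\|\wv_3\|\|\hat{\rv}_1\|=1$, since both are unit vectors. By Lagrange's identity, $\|\wv_3\times\hat{\rv}_1\|^2=1-(\hat{\rv}_1^\top\wv_3)^2$. Set $s\coloneqq|\hat{\rv}_1^\top\wv_3|\le\delta<1$; the coefficient becomes $s/(1-s^2)$. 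This function is increasing on $[0,1)$, because its derivative is $(1+s^2)/(1-s^2)^2>0$. It is therefore bounded by $\delta/(1-\delta^2)$, and the matrix norm is at most $1+\delta/(1-\delta^2)$.

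\textbf{Cross product.} We have $\|\wv_3\times\dot{\wv}_3\|\le\|\dot{\wv}_3\|$ because $\|\wv_3\|=1$. Moreover, $\dot{\wv}_3=\frac{1}{\|\fv_r\|}(\mathbf{I}_3-\hat{\fv}_r\hat{\fv}_r^\top)\dot{\fv}_r$, and the orthogonal projector has norm at most one. Therefore $\|\dot{\wv}_3\|\le\|\dot{\fv}_r\|/\|\fv_r\|$. Multiplying the two bounds gives \eqref{eq:omega_d_bound}.

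\textbf{Main obstacle.} The only delicate points are well-definedness and the monotonicity step. Well-definedness requires $\|\fv_r\|\neq 0$, which is implicit since $\hat{\fv}_r$ must exist, and $\|\wv_3\times\hat{\rv}_1\|>0$, which is guaranteed by $\delta<1$. The monotonicity step is what turns the configuration-dependent coefficient into the uniform constant $\delta/(1-\delta^2)$. A sharper bound is possible by exploiting $\hat{\rv}_1^\top(\wv_3\times\dot{\wv}_3)$ directly, but it is not needed.
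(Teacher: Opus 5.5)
Your proposal is correct and follows the paper's proof essentially step for step. Like the paper, you apply submultiplicativity to the closed form \eqref{eq:omega_d}, use $\|\wv_3\times\hat{\rv}_1\|^2=1-(\hat{\rv}_1^\top\wv_3)^2\ge 1-\delta^2$ to bound the bracketed matrix by $1+\delta/(1-\delta^2)$, and then use $\|\wv_3\times\dot{\wv}_3\|\le\|\dot{\wv}_3\|\le\|\dot{\fv}_r\|/\|\fv_r\|$; your monotonicity argument for $s/(1-s^2)$ is just a more explicit version of the paper's separate bounding of numerator and denominator, and your well-definedness remarks are a welcome addition.
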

    
    \begin{pf}
        The bound \eqref{eq:omega_d_bound} is obtained by starting from the closed-form
        expression \eqref{eq:omega_d} and exploiting the sub-multiplicative property of matrix norms, which yields
        \begin{equation}
            \|\Omegav_d\|
            \le
            \left\|
            \bm{I}_3
            +
            \frac{\hat{\rv}_1^\top \wv_3}{\|\wv_3\times\hat{\rv}_1\|^2}
            \,\wv_3\hat{\rv}_1^\top
            \right\|
            \,
            \|\wv_3 \times \dot{\wv}_3\|.
        \end{equation}
        
        Since $|\hat{\rv}_1^\top \wv_3|\le \delta<1$, it follows that
        $\|\wv_3\times\hat{\rv}_1\|^2 = 1-(\hat{\rv}_1^\top \wv_3)^2 \ge 1-\delta^2$, and therefore
        \begin{equation}
            \left\|
            \bm{I}_3
            +
            \frac{\hat{\rv}_1^\top \wv_3}{\|\wv_3\times\hat{\rv}_1\|^2}
            \,\wv_3\hat{\rv}_1^\top
            \right\|
            \le
            1+\frac{\delta}{1-\delta^2}.
        \end{equation}
        
        Moreover, since $\|\wv_3\|=1$, the cross-product term satisfies
        $\|\wv_3\times\dot{\wv}_3\|\le\|\dot{\wv}_3\|$.
        Using the definition of $\dot{\wv}_3$, we obtain
        \begin{equation}
            \|\dot{\wv}_3\|
            =
            \frac{1}{\|\fv_r\|}
            \|(\mathbf{I}_3-\hat{\fv}_r\hat{\fv}_r^\top)\dot{\fv}_r\|
            \le
            \frac{\|\dot{\fv}_r\|}{\|\fv_r\|}.
        \end{equation}
        Combining the above inequalities yields \eqref{eq:omega_d_bound}.
    \end{pf}

\subsection{Equilibrium Point}
    \begin{prop}\label{thm:eqpoint} (Equilibrium of the Closed-Loop System)
         Consider the closed-loop error system~\eqref{eq:CL_stability3}. The origin $\xv^\circ=\mathbf{0}$ is an equilibrium point.
    \end{prop}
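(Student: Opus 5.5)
The plan is to substitute the error state $(\ev_p,\vv,\ev_R,\Omegav)=(\mathbf{0},\mathbf{0},\mathbf{0},\mathbf{0})$ into the right-hand side of~\eqref{eq:CL_stability3} and check that every component vanishes. Here the origin $\xv^\circ=\mathbf{0}$ means $\ev_p=\mathbf{0}$, $\vv=\mathbf{0}$, $\Omegav=\mathbf{0}$, and $\Rm=\Rm_d$, which is the zero-attitude-error configuration. On this set $\ev_R=\frac{1}{2}[\Rm_d^\top\Rm-\Rm^\top\Rm_d]_\vee=\mathbf{0}$. I will verify the four equations in order.

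The first and last equations are immediate. Indeed, $\dot{\ev}_p=\vv=\mathbf{0}$, and $\Jm\dot{\Omegav}=-\Km_R\ev_R-\Km_\Omega\Omegav=\mathbf{0}$. In the velocity equation, $-\Km_p\ev_p-\Km_v\vv$ vanishes. The spurious term $\Pm_\star^\perp\Rm\Bm\Cm^{-1}(\Omegav^\times\Jm\Omegav-\Km_R\ev_R-\Km_\Omega\Omegav)$ also vanishes, since its argument is $\mathbf{0}$ (the quadratic term $\Omegav^\times\Jm\Omegav$ included). For the misalignment term $\Xm$, the origin gives $\fv_r=mg\ev_3\neq\mathbf{0}$, so $\hat{\fv}_r$, $\Rm_w$ and $\Rm_d$ are well defined. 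Using $\Rm=\Rm_d$ gives $\dv_\star^\top\Rm_d^\top\Rm\dv_\star=\|\dv_\star\|^2=1$, hence $\Xm=\|\fv_r\|(\Rm_d\dv_\star-\Rm_d\dv_\star)=\mathbf{0}$. Equivalently, $\Pm_\star^\perp\fv_r=\mathbf{0}$ because $\fv_r$ is parallel to $\Rm_d\dv_\star$ by construction.

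The attitude-error equation is the only nontrivial step, because $\Omegav_d$ depends on $\dot{\fv}_r=-\Km_p\vv-\Km_v\dot{\vv}$, which contains $\dot{\vv}$. I will use the facts already established: at the origin $\vv=\mathbf{0}$, and $\dot{\vv}=\mathbf{0}$ by the velocity equation just checked. Hence $\dot{\fv}_r=\mathbf{0}$. Lemma~\ref{lem:omega_d_bound} then gives $\|\Omegav_d\|\le(1+\delta/(1-\delta^2))\cdot 0/\|\fv_r\|=0$. Alternatively, Lemma~\ref{lem:omega_d} gives $\dot{\wv}_3=\mathbf{0}$ and thus $\Omegav_d=\mathbf{0}$ directly. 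Both routes need the well-posedness condition $|\hat{\rv}_1^\top\wv_3|\le\delta<1$, i.e.\ the heading $\Rm_r\ev_1$ must not be parallel to $\ev_3$. I will state this as the standing nondegeneracy assumption under which $\Rm_d$ is defined. With $\Omegav=\Omegav_d=\mathbf{0}$, we get $\dot{\ev}_R=\Cm(\Rm_d^\top\Rm)(\mathbf{0}-\mathbf{0})=\mathbf{0}$.

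The main obstacle I anticipate is this implicit dependence of $\Omegav_d$ on $\dot{\vv}$. It makes the closed loop formally an implicit system, so the argument must evaluate $\dot{\vv}$ before $\dot{\ev}_R$. This ordering is legitimate because the $\dot{\vv}$ equation does not involve $\Omegav_d$. Once all right-hand sides are shown to be zero, the origin is an equilibrium point, which proves Proposition~\ref{thm:eqpoint}.
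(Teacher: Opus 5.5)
Your proposal is correct and follows essentially the same route as the paper: substitute the origin, observe that the $\dot{\ev}_p$, $\dot{\vv}$ and $\dot{\Omegav}$ right-hand sides vanish, and then use $\vv=\dot{\vv}=\mathbf{0}\Rightarrow\dot{\fv}_r=\mathbf{0}\Rightarrow\Omegav_d=\mathbf{0}$ to get $\dot{\ev}_R=\mathbf{0}$. Your evaluation of $\Xm$ through its explicit formula with $\Rm=\Rm_d$ refines a point the paper leaves implicit, and so does your explicit requirement that the heading $\Rm_r\ev_1$ not be parallel to $\ev_3$, so that $\Rm_d$ and $\Omegav_d$ are well defined.
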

    \begin{pf}
        For the first and the fourth equations, it is immediate that the right-hand side vanishes at equilibrium: $\dot{\ev}_p=\vv=\bm{0}$, and the moment feedback terms are proportional to $\ev_R$ and $\Omegav$, hence $\dot{\Omegav}=\bm{0}$ when $\ev_R=\bm{0}$ and $\Omegav=\bm{0}$.
        
        For the second equation, substituting $\ev_p = \vv = \ev_R = \Omegav = \bm{0}$ yields
        \begin{equation}
            m\dot{\vv} = -\Km_p \bm{0} - \Km_v \bm{0} + \Xm + \Pm_\star^\perp \Rm \Bm \Cm^{-1} \bm{0} = \Xm.
        \end{equation}
        
        At equilibrium, the reference force is $\fv_r = mg \ev_3$, and the desired orientation satisfies $\Rm_d \dv^\star = \ev_3$, which implies $\Xm = \fv_r - \fv_r = \bm{0}$. Hence, $\dot{\vv} = \bm{0}$.
        
        Finally, substituting \eqref{eq:omega_d} into the third equation and using
        $\vv=\dot{\vv}=\bm 0 \Rightarrow \dot{\fv}_r=\bm 0 \Rightarrow \dot{\wv}_3=\bm 0$, we get that $\dot{\ev}_R = \bm{0}$, which completes the proof that $\xv^\circ$ is an equilibrium point of the closed-loop system.
    \end{pf}

\subsection{Candidate Lyapunov Function}
    \begin{defn}\label{def:Lpsi_domain}
        Fix $\psi\in(0,1)$. We define the set
        \begin{equation}\label{eq:Lpsi_def}
            \mathcal{L}_\psi := \left\{ (\Rm, \Rm_d) \in \SO \times \SO \mid \Psi(\Rm, \Rm_d) \le \psi \right\}.
        \end{equation}
    \end{defn}
    The set $\mathcal{L}_\psi$ is a compact sublevel set of the attitude error function and is strictly contained in the domain $\mathcal{L}_1$, representing the set of admissible attitude configurations for which the eigen-axis rotation angle between $\Rm$ and $\Rm_d$ is strictly less than $\pi/2$. Within $\mathcal{L}_1$, the mapping from the configuration to the attitude error vector $\ev_R$ is bijective and non-singular.
    
    Within $\mathcal{L}_\psi$, the attitude error satisfies the following quadratic bounds:
    \begin{equation} \label{eq:psi_quad_bounds}
        \frac{1}{2}\|\ev_R\|^2 \le \Psi(\Rm, \Rm_d) \le \frac{1}{2-\psi}\|\ev_R\|^2.
    \end{equation}
    This property allows us to derive the explicit upper bound on the error norm used throughout the stability analysis:
    \begin{equation} \label{eq:er_max}
        \|\ev_R\| \le \sqrt{\psi(2-\psi)} =: e_R^{\max}.
    \end{equation}
    
    
    
    Given the attitude error bound $e_R^{\max}$ derived in \eqref{eq:er_max}, and selecting positive constants $e_p^{\max}$, $v^{\max}$, and $\Omega^{\max}$ for the position, translational velocity, and angular velocity errors, we define the operational domain $\mathcal{D}$ for our analysis:
    \begin{equation}\label{eq:region_stability}
        \resizebox{\columnwidth}{!}{$
        \mathcal{D} := \left\{
        \begin{aligned}
        (\ev_p, \vv, \ev_R, \Omegav) \in \mathbb{R}^{12}
        \;\mid\;& \|\ev_p\| \le e_p^{\max}, \|\vv\| \le v^{\max},\\
               & \|\ev_R\| \le e_R^{\max}, \|\Omegav\| \le \Omega^{\max}
        \end{aligned}
        \right\}
        $}
    \end{equation}
    The bounds $e_p^{\max}$ and $v^{\max}$ are chosen sufficiently small such that, for a given design parameter $\delta \in (0, 1)$, the geometric condition $|\hat{\rv}_1^\top \wv_3| \le \delta$ is strictly satisfied for all configurations in $\mathcal{D}$. Moreover, recalling the definition of the reference control force in \eqref{eq:f_r}, the bounds $\|\ev_p\|\le e_p^{\max}$ and $\|\vv\|\le v^{\max}$ imply the uniform lower bound
    \begin{equation}\label{eq:fr_lower_bound}
        \|\fv_r\|
        \ge
        mg - k_p e_p^{\max} - k_v v^{\max}
        =: \underline f.
    \end{equation}
    We select $e_p^{\max}$ and $v^{\max}$ such that $\underline f>0$.
    
    All subsequent estimates and inequalities are evaluated over $\mathcal{D}$. Let the Lyapunov candidate $V : \mathcal{D} \to \mathbb{R}$ be defined as
    \begin{equation}\label{eq:V}
        V(\ev_p,\vv,\ev_R,\Omegav) \coloneqq V_1(\ev_p,\vv) + V_2(\ev_R,\Omegav),
    \end{equation}
    with
    \begin{subequations}
        \begin{align}
            V_1(\ev_p,\vv) &= \tfrac12 m \|\vv\|^2 
                           + \tfrac12 k_p \|\ev_p\|^2 
                           + c_1\, \ev_p^\top \vv, \label{eq:V1}\\[3pt]
            V_2(\ev_R,\Omegav) &= \tfrac12 \Omegav^\top \Jm \Omegav 
                                + k_R\, \Psi(\Rm,\Rm_d) 
                                + c_2\, \ev_R^\top \Omegav. \label{eq:V2}
        \end{align}
    \end{subequations}
    Here $c_1, c_2 \ge 0$ are scalar design parameters to be selected later.
    
    For convenience, define the normed error stacks
    $\zv_1 \coloneqq [\,\|\ev_p\| \;\; \|\vv\|\,]^\top$ and
    $\zv_2 \coloneqq [\,\|\ev_R\| \;\; \|\Omegav\|\,]^\top$.
    We first recall some useful lemmas that establish bounds on $V_1$, $V_2$ and their time derivatives.
    
    \begin{lem}\label{lem:v1bound}
        The function $V_1$ in \eqref{eq:V1} satisfies the following bounds:
        \begin{subequations}
        \begin{align}
            & \frac{1}{2}\zv_1^\top \Mm_{11}\zv_1 \le V_1 \le \frac{1}{2}\zv_1^\top \Mm_{12}\zv_1, \label{eq:V1_bound}
        \end{align}      
        \end{subequations}
    where the matrices $\Mm_{11}$ and $\Mm_{12}$ are defined as
    \begin{subequations}
    \begin{align}
    \Mm_{11}&=\begin{bmatrix}
            k_p & -c_1\\
            -c_1 & m
        \end{bmatrix},\;\;
    \Mm_{12}=\begin{bmatrix}
            k_p & c_1\\
            c_1 & m
        \end{bmatrix}. 
    \label{eq:M_translational}
    \end{align}
    \end{subequations}
    \end{lem}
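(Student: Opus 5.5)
The bound follows from the Cauchy--Schwarz inequality applied to the cross term $c_1\,\ev_p^\top\vv$; the quadratic terms already match the diagonals of $\Mm_{11}$ and $\Mm_{12}$. I will use that the gains are isotropic, as assumed at the start of Section~\ref{sect:proof}, so $\Km_p = k_p\mathbf{I}_3$ and $\tfrac12 k_p\|\ev_p\|^2$ is exactly the quadratic form in $\|\ev_p\|$.

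First, since $c_1\ge 0$, Cauchy--Schwarz gives
\begin{equation*}
-c_1\|\ev_p\|\,\|\vv\| \;\le\; c_1\,\ev_p^\top\vv \;\le\; c_1\|\ev_p\|\,\|\vv\|.
\end{equation*}

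Second, I add the identical terms $\tfrac12 m\|\vv\|^2 + \tfrac12 k_p\|\ev_p\|^2$ to all three parts of this chain. This gives
\begin{equation*}
\tfrac12\bigl(k_p\|\ev_p\|^2 - 2c_1\|\ev_p\|\|\vv\| + m\|\vv\|^2\bigr) \le V_1 \le \tfrac12\bigl(k_p\|\ev_p\|^2 + 2c_1\|\ev_p\|\|\vv\| + m\|\vv\|^2\bigr).
\end{equation*}

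Third, I expand the quadratic forms with $\zv_1 = [\,\|\ev_p\| \;\; \|\vv\|\,]^\top$. This gives $\zv_1^\top\Mm_{11}\zv_1 = k_p\|\ev_p\|^2 - 2c_1\|\ev_p\|\|\vv\| + m\|\vv\|^2$, and the analogous expression with $+2c_1$ for $\Mm_{12}$. These coincide with the bracketed expressions above, which establishes \eqref{eq:V1_bound}.

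There is no real obstacle in this argument. The only points to check are the sign convention on $c_1$ and the use of isotropic gains. The lemma does not claim positive definiteness of $\Mm_{11}$, i.e.\ $c_1 < \sqrt{k_p m}$; that condition is needed later for $V_1$ to be positive definite, and I would flag it for the subsequent choice of $c_1$.
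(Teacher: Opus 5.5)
Your proposal is correct and follows the paper's proof exactly: bound the cross term $c_1\,\ev_p^\top\vv$ by Cauchy--Schwarz (using $c_1\ge 0$), keep the diagonal terms, and identify the resulting expressions as $\tfrac12\zv_1^\top\Mm_{11}\zv_1$ and $\tfrac12\zv_1^\top\Mm_{12}\zv_1$. Your isotropic-gain remark is harmless but not needed, since \eqref{eq:V1} already defines $V_1$ with the scalar $k_p$.
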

    
    \begin{pf}
        The result follows by bounding the cross term via Cauchy-Schwarz:
        $-\|\ev_p\|\,\|\vv\| \le \ev_p^\top \vv \le \|\ev_p\|\,\|\vv\|$.
        Substituting these bounds into \eqref{eq:V1} and collecting coefficients of $\|\ev_p\|$ and $\|\vv\|$ yields exactly \eqref{eq:V1_bound}.
    \end{pf}
    
    \begin{lem}\label{lem:v2bound} 
        The function $V_2$ in \eqref{eq:V2} satisfies the following inequalities:
        \begin{subequations}
            \begin{align}
               & \frac{1}{2}\zv_2^\top \Mm_{21}\zv_2 \le V_2 \le \frac{1}{2}\zv_2^\top \Mm_{22}\zv_2,  \label{eq:V2_bound}
            \end{align}
        \end{subequations}
    where the matrices  $\Mm_{21}$ and $\Mm_{22}$ are defined as
    \begin{subequations}
        \begin{align}
            \Mm_{21} &= \begin{bmatrix}
                k_R & -c_2\\
                -c_2 & \lambda_{\min}(\Jm)
            \end{bmatrix},\;
            \Mm_{22} = \begin{bmatrix}
                \frac{2k_R}{2-\psi} & c_2\\
                c_2 & \lambda_{\max}(\Jm)
            \end{bmatrix}.\label{eq:M_attitude}
        \end{align}
    \end{subequations}
    \end{lem}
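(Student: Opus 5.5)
The plan mirrors the proof of Lemma~\ref{lem:v1bound}: bound the cross term by Cauchy--Schwarz, then bound the two ``diagonal'' terms of $V_2$ separately using the inertia eigenvalues and the quadratic bounds \eqref{eq:psi_quad_bounds} on $\Psi$. Everything is done on $\mathcal{D}$, so that $(\Rm,\Rm_d)\in\mathcal{L}_\psi$ and \eqref{eq:psi_quad_bounds} is available.

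First, since $\Jm$ is symmetric positive definite, the Rayleigh quotient gives
\begin{equation*}
\tfrac12\lambda_{\min}(\Jm)\|\Omegav\|^2 \le \tfrac12\Omegav^\top\Jm\Omegav \le \tfrac12\lambda_{\max}(\Jm)\|\Omegav\|^2 .
\end{equation*}
Second, multiplying \eqref{eq:psi_quad_bounds} by $k_R>0$ gives
\begin{equation*}
\tfrac{k_R}{2}\|\ev_R\|^2 \le k_R\Psi(\Rm,\Rm_d) \le \tfrac{k_R}{2-\psi}\|\ev_R\|^2 = \tfrac12\cdot\tfrac{2k_R}{2-\psi}\|\ev_R\|^2 .
\end{equation*}
This explains the $(1,1)$ entries $k_R$ and $\tfrac{2k_R}{2-\psi}$ of $\Mm_{21}$ and $\Mm_{22}$. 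Third, since $c_2\ge0$, Cauchy--Schwarz yields
\begin{equation*}
-c_2\|\ev_R\|\|\Omegav\| \le c_2\,\ev_R^\top\Omegav \le c_2\|\ev_R\|\|\Omegav\| .
\end{equation*}
Writing $c_2\|\ev_R\|\|\Omegav\| = \tfrac12\bigl(2c_2\|\ev_R\|\|\Omegav\|\bigr)$ identifies this term with the off-diagonal contribution $\pm c_2$ in the symmetric quadratic form $\tfrac12\zv_2^\top\Mm\zv_2$.

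Summing the three lower bounds gives $\tfrac12\zv_2^\top\Mm_{21}\zv_2\le V_2$, and summing the three upper bounds gives $V_2\le\tfrac12\zv_2^\top\Mm_{22}\zv_2$, which is \eqref{eq:V2_bound}.

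The only step needing care is the validity of the upper quadratic bound on $\Psi$. It requires $(\Rm,\Rm_d)$ to lie in the sublevel set $\mathcal{L}_\psi$ with $\psi<1$, so the lemma holds on $\mathcal{D}$ (and in particular along trajectories that remain in $\mathcal{L}_\psi$), not globally on $\SO$. I would state this restriction explicitly and rely on \eqref{eq:psi_quad_bounds} as established in the text. Positive definiteness of $\Mm_{21}$, which requires $c_2^2<k_R\lambda_{\min}(\Jm)$, is not needed for the inequality itself and is deferred to the later choice of $c_2$.
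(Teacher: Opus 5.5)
Your proposal is correct and is essentially the paper's proof: spectral (Rayleigh) bounds on $\Jm$, the quadratic bounds \eqref{eq:psi_quad_bounds} scaled by $k_R$, and Cauchy--Schwarz on $c_2\,\ev_R^\top\Omegav$, summed termwise into the two quadratic forms. Your closing caveat is well placed, and it is slightly sharper than the paper's wording: only the upper bound needs $\Psi(\Rm,\Rm_d)\le\psi$, and this must be assumed, not deduced. Indeed, $\|\ev_R\|\le e_R^{\max}$ in $\mathcal{D}$ does not imply it, since $\|\ev_R\|=\sin\theta$ is also small near $\theta=\pi$. So your phrase ``on $\mathcal{D}$, so that $(\Rm,\Rm_d)\in\mathcal{L}_\psi$'' should be read as a standing hypothesis.
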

    \begin{pf}
        Using Cauchy-Schwarz and the spectral bounds of $\Jm$,
        \begin{align}
            \tfrac12\,\lambda_{\min}(\Jm)\,\|\Omegav\|^2
            \;\le\; \tfrac12\,\Omegav^\top \Jm \Omegav
            \;\le\; \tfrac12\,\lambda_{\max}(\Jm)\,\|\Omegav\|^2,\\
            -\,c_2\,\|\ev_R\|\,\|\Omegav\|
            \;\le\; c_2\,\ev_R^\top \Omegav
            \;\le\; c_2\,\|\ev_R\|\,\|\Omegav\|.
        \end{align}
        Combining these with \eqref{eq:psi_quad_bounds} in \eqref{eq:V2} and grouping the terms in $\|\ev_R\|$ and $\|\Omegav\|$ gives \eqref{eq:V2_bound}.
    \end{pf}
    
    \begin{lem}\label{lem:vdot1bound}
        The time derivative of the function $V_1$ in \eqref{eq:V1} satisfies the following inequality:
        \begin{equation}\label{eq:V1_dot_bound}  
        \dot{V}_1 \le -\zv_1^\top \Wm_{1} \zv_1 + \zv_1^\top \Wm_{12} \zv_2,      
        \end{equation}
        where the matrices  $\Wm_{1}$ and $\Wm_{12}$ are defined as
        \begin{subequations}
            \begin{align}
                \Wm_{1}\!&=\!\begin{bmatrix}
                    c_1\frac{k_p}{m}(1 - e_R^{\max}) & -c_1\frac{k_v}{2m}(1 + e_R^{\max})\\
                    -c_1\frac{k_v}{2m}(1 + e_R^{\max}) & k_v(1 - e_R^{\max}) - c_1
                \end{bmatrix},\label{eq:W1}\\
                \Wm_{12}\!&=\!\begin{bmatrix}
                    c_1(g\!+\!k_R\frac{\gamma}{m}) & c_1 \frac{\gamma}{m}\bigl(\lambda_{\max}(\Jm)\Omega^{\max}\!+\!k_{\Omega}\bigr) \\
                    k_p e_p^{\max}\!+\!mg\!+\!\gamma k_R & \gamma\bigl(\lambda_{\max}(\Jm)\Omega^{\max}\!+\! k_{\Omega}\bigr)
                \end{bmatrix},\label{eq:W12}
            \end{align}
        \end{subequations}
        where $
            \gamma \coloneqq \sigma_{\max}(\Bm)/\sigma_{\min}(\Cm).$
    \end{lem}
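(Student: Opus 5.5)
Differentiate $V_1$ along the closed-loop system \eqref{eq:CL_stability3}. With isotropic gains $\Km_p=k_p\mathbf{I}_3$ and $\Km_v=k_v\mathbf{I}_3$, this gives
\begin{equation*}
\dot V_1 = (m\vv + c_1\ev_p)^\top\dot{\vv}\,\tfrac{1}{m}\cdot m + k_p\ev_p^\top\vv + c_1\|\vv\|^2,
\end{equation*}
where the factors $\tfrac{1}{m}\cdot m$ are written out only for clarity. Now substitute $m\dot{\vv} = -k_p\ev_p - k_v\vv + \Xm + \Pm_\star^\perp\Rm\Bm\Cm^{-1}\tauv_r$. The term $-k_p\vv^\top\ev_p$ cancels against $k_p\ev_p^\top\vv$. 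What remains splits into two groups.

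The first group is the \emph{nominal} part: $-k_v\|\vv\|^2 + c_1\|\vv\|^2 - \tfrac{c_1k_p}{m}\|\ev_p\|^2 - \tfrac{c_1k_v}{m}\ev_p^\top\vv$. Bounding the cross term by Cauchy--Schwarz yields the diagonal and off-diagonal structure of $\Wm_1$, up to the $(1\pm e_R^{\max})$ factors.

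The second group collects the coupling terms $(\vv + \tfrac{c_1}{m}\ev_p)^\top(\Xm + \Pm_\star^\perp\Rm\Bm\Cm^{-1}\tauv_r)$. These will be bounded in terms of $\zv_2$, and partly re-absorbed into $\zv_1$.

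Next, bound $\Xm$. Since $\Xm = \|\fv_r\|\bigl((\dv_\star^\top\Rm_d^\top\Rm\dv_\star)\Rm\dv_\star - \Rm_d\dv_\star\bigr)$, I will use the standard estimate from Lee \emph{et al.}: the misalignment vector has norm at most $\|\ev_R\|$. I would verify this via $\|(\bm{b}_3^\top\bm{b}_{3d})\bm{b}_3 - \bm{b}_{3d}\| = \sin\theta \le \|\ev_R\|$ inside $\mathcal{L}_1$. This gives $\|\Xm\| \le \|\fv_r\|\,\|\ev_R\|$. Then bound $\|\fv_r\| \le mg + k_p\|\ev_p\| + k_v\|\vv\|$.

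The pieces $k_p\|\ev_p\|\,\|\ev_R\|$ and $mg\|\ev_R\|$ go into $\Wm_{12}$; this explains the entries $k_pe_p^{\max} + mg$ and $c_1 g$. To handle the piece $k_v\|\vv\|\,\|\ev_R\|$, which is quadratic in $\zv_1$ times $\|\ev_R\|$, I bound $\|\ev_R\|$ by $e_R^{\max}$ and absorb it into the $\zv_1$ quadratic form. This is what produces the factors $(1 - e_R^{\max})$ and $(1 + e_R^{\max})$ in $\Wm_1$. I would do the same with the $k_p$ part multiplied by $\tfrac{c_1}{m}\ev_p$, so that $-\tfrac{c_1k_p}{m}\|\ev_p\|^2$ becomes $-\tfrac{c_1k_p}{m}(1 - e_R^{\max})\|\ev_p\|^2$. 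Some care is needed to assign each product either to $\Wm_1$ (using $e_R^{\max}$) or to $\Wm_{12}$ (using $e_p^{\max}$) in exactly the way that matches the stated matrices.

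For the spurious force, use $\|\Pm_\star^\perp\| \le 1$, $\|\Rm\| = 1$, and $\|\Bm\Cm^{-1}\| \le \sigma_{\max}(\Bm)/\sigma_{\min}(\Cm) = \gamma$. Also use
\begin{equation*}
\|\tauv_r\| \le \lambda_{\max}(\Jm)\|\Omegav\|^2 + k_R\|\ev_R\| + k_\Omega\|\Omegav\| \le k_R\|\ev_R\| + \bigl(\lambda_{\max}(\Jm)\Omega^{\max} + k_\Omega\bigr)\|\Omegav\|,
\end{equation*}
where the quadratic gyroscopic term is linearized on $\mathcal{D}$ using $\|\Omegav\| \le \Omega^{\max}$. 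Multiplying by $\|\vv\| + \tfrac{c_1}{m}\|\ev_p\|$ yields exactly the $\gamma$-entries of $\Wm_{12}$, with rows indexed by $(\|\ev_p\|, \|\vv\|)$ and columns by $(\|\ev_R\|, \|\Omegav\|)$.

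The main obstacle is this bookkeeping for $\Xm$: splitting $\|\fv_r\|\,\|\ev_R\|$ into a $\zv_1$–$\zv_2$ cross term plus a $\zv_1$-quadratic term weighted by $e_R^{\max}$, and checking the misalignment bound $\|\Xm\|/\|\fv_r\| \le \|\ev_R\|$ on $\mathcal{L}_\psi$. I expect the exact placement of the $(1\pm e_R^{\max})$ factors to require a generous, possibly slightly conservative, inequality.
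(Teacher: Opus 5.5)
Your proposal is correct and follows the paper's proof essentially step for step. It uses the same expansion of $\dot V_1$, the same bounds $\|\Pm_\star^\perp\Rm\Bm\Cm^{-1}\|\le\gamma$, $\|\Omegav^\times\Jm - k_\Omega\mathbf{I}_3\|\le\lambda_{\max}(\Jm)\Omega^{\max}+k_\Omega$ and $\|\Xm\|\le(k_p\|\ev_p\|+k_v\|\vv\|+mg)\|\ev_R\|$, and the same placement of the cubic terms ($e_R^{\max}$ into $\Wm_1$, $e_p^{\max}$ for $k_p\|\vv\|\|\ev_p\|\|\ev_R\|$ into $\Wm_{12}$), so you lose nothing beyond what the paper loses. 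Your misalignment step, stated as the inequality that the sine of the angle between $\Rm\dv_\star$ and $\Rm_d\dv_\star$ is at most $\|\ev_R\|$ on $\mathcal{L}_1$, is actually the accurate form of the paper's claimed equality; that equality holds, for nonzero error, only when the rotation axis of $\Rm_d^\top\Rm$ is orthogonal to $\dv_\star$.
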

    \begin{pf}
        The time derivative of $V_1$ along the closed-loop trajectories is given by:
       \begin{align}\label{eq:V1_dot}
            \dot{V}_1 &= -(k_v - c_1)\|\vv\|^2 - c_1 \frac{k_p}{m}\|\ev_p\|^2 - c_1 \frac{k_v}{m}\,\ev_p^\top \vv \notag \\
            &\quad + \left(\tfrac{c_1}{m}\ev_p^\top + \vv^\top\right)\Pm_\star^\perp\Rm \Bm \Cm^{-1} \left(\Omegav^\times\Jm - k_{\Omega}\mathbf{I}_3\right)\Omegav \notag\\
            &\quad - k_R \left(\tfrac{c_1}{m}\ev_p^\top + \vv^\top\right)\Pm_\star^\perp\Rm \Bm \Cm^{-1} \ev_R\notag \\
            &\quad+ \left(\tfrac{c_1}{m}\ev_p^\top + \vv^\top\right)\Xm.
        \end{align}
        
        In order to derive an upper bound, we bound each term by
        repeatedly applying the Cauchy-Schwarz inequality, the triangle inequality,
        and the sub-multiplicative property of induced matrix norms.
    
        In particular, we use the bounds
        \begin{gather}
        \bigl\|\Omegav^\times\Jm - k_{\Omega}\mathbf{I}_3\bigr\|
        \;\le\; \lambda_{\max}(\Jm)\,\Omega^{\max} + k_{\Omega}, \label{eq:u11}\\
        \bigl\|\Pm_\star^\perp\Rm \Bm \Cm^{-1}\bigr\|
        \;\le\; \gamma. \label{eq:u12}
        \end{gather}
    
        From the definition of $\Xm$, it holds
        \begin{equation}
            \|\Xm\|
            \le
            \|\fv_r\|\,
            \Bigl\|
            (\dv_\star^\top \Rm_d^\top \Rm \dv_\star)\,\Rm \dv_\star
            - \Rm_d \dv_\star
            \Bigr\|.
        \end{equation}
        The first factor satisfies
        \begin{equation}
            \|\fv_r\|
            \le
            k_p\|\ev_p\| + k_v\|\vv\| + mg.
        \end{equation}
        To bound the second factor, define $\hat{\fv}_c=\Rm\dv_\star$ and
        $\hat{\fv}_r=\Rm_d\dv_\star$, which are unit vectors. Then,
        \begin{equation}
        (\dv_\star^\top \Rm_d^\top \Rm \dv_\star)\,\Rm \dv_\star - \Rm_d \dv_\star
        =
        \hat{\fv}_c \times (\hat{\fv}_c \times \hat{\fv}_r).
        \end{equation}
        The norm of this vector equals $\sin\theta$, where $\theta$ is the angle between
        $\hat{\fv}_c$ and $\hat{\fv}_r$. Using the relation \eqref{eq:relation_er_psi},
        we obtain
        \begin{equation}
        \Bigl\|
        (\dv_\star^\top \Rm_d^\top \Rm \dv_\star)\,\Rm \dv_\star
        - \Rm_d \dv_\star
        \Bigr\|
        =
        \|\ev_R\|,
        \end{equation}
        which proves 
        \begin{equation}
            \|\Xm\| \;\le\; \bigl(k_p\|\ev_p\| + k_v\|\vv\| + mg\bigr)\|\ev_R\|. \label{eq:u13}
        \end{equation}
    
        Substituting \eqref{eq:u11}, \eqref{eq:u12} and \eqref{eq:u13} together with \eqref{eq:er_max} into \eqref{eq:V1_dot} and collecting like terms yields
        \begin{align}
            \dot V_1 \le\;&
            -(k_v-c_1)\|\vv\|^2
            - \frac{c_1 k_p}{m}\|\ev_p\|^2
            + \frac{c_1 k_v}{m}\|\ev_p\|\,\|\vv\|
            \nonumber\\
            &+ \Bigl(\frac{c_1}{m}\|\ev_p\|+\|\vv\|\Bigr)\gamma
            \Bigl[
            (\lambda_{\max}(\Jm)\Omega^{\max}+k_\Omega)\|\Omegav\|
            \nonumber\\
            &\hspace{4em}
            + k_R\|\ev_R\|
            \Bigr]
            \nonumber\\
            &+ \Bigl(\frac{c_1}{m}\|\ev_p\|+\|\vv\|\Bigr)
            \bigl(k_p\|\ev_p\|+k_v\|\vv\|+mg\bigr)\|\ev_R\|.
        \end{align}
        To eliminate higher-order mixed terms, we enforce the local bound
        $\|\ev_p\|\le e_p^{\max}$ and again use \eqref{eq:er_max} to absorb attitude-error factors. This yields
        \begin{align}
            \dot V_1 \le\;&
            -\bigl(k_v(1-e_R^{\max})-c_1\bigr)\|\vv\|^2\nonumber
            - \frac{c_1 k_p}{m}(1-e_R^{\max})\|\ev_p\|^2
            \nonumber\\
            &+ \frac{c_1 k_v}{m}(1+e_R^{\max})\|\ev_p\|\,\|\vv\| \nonumber\\
            &+ c_1\Bigl(g+\gamma\frac{k_R}{m}\Bigr)\|\ev_p\|\,\|\ev_R\|
            \nonumber\\
            &+ \bigl(k_p e_p^{\max}+mg+\gamma k_R\bigr)\|\vv\|\,\|\ev_R\|
            \nonumber\\
            &+ \frac{c_1\gamma}{m}
            (\lambda_{\max}(\Jm)\Omega^{\max}+k_\Omega)\|\ev_p\|\,\|\Omegav\|
            \notag\\
            &+ \gamma(\lambda_{\max}(\Jm)\Omega^{\max}+k_\Omega)\|\vv\|\,\|\Omegav\|.
        \end{align}
    
        Finally, defining $\zv_1=[\|\ev_p\|\;\|\vv\|]^\top$ and
        $\zv_2=[\|\ev_R\|\;\|\Omegav\|]^\top$, regrouping homogeneous terms leads to the
        compact quadratic bound \eqref{eq:V1_dot_bound}.
    \end{pf}
    
    \begin{lem}\label{lem:vdot2bound}
        The time derivative of the function $V_2$ in \eqref{eq:V2} satisfies
        \begin{equation}
            \dot{V}_2 \le -\,\zv_2^\top \Wm_2 \zv_2 \;+\; \zv_1^\top \Wm_{21}\zv_2.
            \label{eq:V2_dot_bound}
        \end{equation}
        The matrices $\Wm_{2}$ and $\Wm_{21}$ are given by
        \begin{subequations}\label{eq:W_matrices}
        \begin{align}
            \resizebox{\columnwidth}{!}{$
            \Wm_2 =
            \begin{bmatrix}
            c_2 \dfrac{k_R}{\lambda_{\max}(\Jm)} - k_R k_v \alpha (mg + k_R\gamma)
            & -\dfrac{\sigma_{R,\Omega}}{2}\\[2pt]
            -\dfrac{\sigma_{R,\Omega}}{2}
            & k_{\Omega} - c_2\!\left(1 + k_v\gamma\alpha(\beta + k_{\Omega})\right)
            \end{bmatrix}
            $}\label{eq:W2}\\[2pt]
            \resizebox{\columnwidth}{!}{$
            \Wm_{21} =
            \begin{bmatrix}
            k_R k_p k_v \alpha (1 + e_R^{\max})
            & c_2 k_p k_v \alpha (1 + e_R^{\max})\\[2pt]
            k_R \alpha m \!\left(\left|\dfrac{k_v^2}{m} - k_p\right|
            + \dfrac{k_v^2}{m}e_R^{\max}\right)
            & c_2 \alpha m \!\left(\left|\dfrac{k_v^2}{m} - k_p\right|
            + \dfrac{k_v^2}{m}e_R^{\max}\right)
            \end{bmatrix}
            $}\label{eq:W21}
        \end{align}
        \end{subequations}
        Here,
        \begin{align}
            \sigma_{R,{\Omega}} &\coloneqq
            c_2\!\left(\frac{k_{\Omega}}{\lambda_{\min}(\Jm)} + k_v \gamma \alpha \bigl(mg + k_R \gamma\bigr)\right)\notag\\
            &\;\quad+ k_R k_v \gamma \alpha \bigl(\beta + k_{\Omega}\bigr),\label{eq:sigma_Romega}\\
            \alpha &\coloneqq \left(1 + \frac{\delta}{1 - \delta^2}\right)\frac{1}{m\,\underline f},\quad \beta \coloneqq \lambda_{\max}(\Jm)\,\Omega^{\max}.\label{eq:alpha_beta}
        \end{align}
    \end{lem}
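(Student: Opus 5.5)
We differentiate $V_2$ along \eqref{eq:CL_stability3}, split the result into a nominal part, which is the part that appears in Lee \emph{et al.}, and a perturbation part generated by $\Omegav_d$. We then bound the perturbation with Lemma~\ref{lem:omega_d_bound}. The main difficulty is that $\Omegav_d$ depends on $\dot{\vv}$, so the translational error re-enters the attitude loop. This two-way coupling is exactly what rules out a cascade argument.

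\textbf{Step 1: differentiate $V_2$.} We use $\dot\Psi = \ev_R^\top(\Omegav - \Rm^\top\Rm_d\Omegav_d)$, the identity $\dot{\ev}_R = \Cm(\Rm_d^\top\Rm)(\Omegav-\Rm^\top\Rm_d\Omegav_d)$ with $\|\Cm(\Rm_d^\top\Rm)\|\le 1$ (Appendix), and $\Jm\dot\Omegav = -k_R\ev_R - k_\Omega\Omegav$. This gives
\begin{align*}
\dot V_2 ={}& -k_\Omega\|\Omegav\|^2 - c_2\,\ev_R^\top\Jm^{-1}(k_R\ev_R+k_\Omega\Omegav) + c_2\,\Omegav^\top\Cm(\Rm_d^\top\Rm)\Omegav\\
&- k_R\,\ev_R^\top\Rm^\top\Rm_d\Omegav_d - c_2\,\Omegav^\top\Cm(\Rm_d^\top\Rm)\Rm^\top\Rm_d\Omegav_d .
\end{align*}
The $k_R\ev_R^\top\Omegav$ terms cancel.

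\textbf{Step 2: bound the nominal part.} The nominal terms are bounded in the usual way:
\begin{itemize}
\item $-c_2 k_R \ev_R^\top\Jm^{-1}\ev_R$ is bounded using $\lambda_{\max}(\Jm)$, which gives the $(1,1)$ leading entry $c_2k_R/\lambda_{\max}(\Jm)$;
\item the cross term is bounded by $c_2k_\Omega/\lambda_{\min}(\Jm)\,\|\ev_R\|\|\Omegav\|$;
\item the $c_2\|\Omegav\|^2$ term produces the $-c_2$ in the $(2,2)$ entry.
\end{itemize}

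\textbf{Step 3: bound $\Omegav_d$ through $\dot{\vv}$.} The perturbation terms are bounded by $(k_R\|\ev_R\|+c_2\|\Omegav\|)\|\Omegav_d\|$. On $\mathcal{D}$, Lemma~\ref{lem:omega_d_bound} and \eqref{eq:fr_lower_bound} give $\|\Omegav_d\|\le m\alpha\|\dot\fv_r\|$ with $\dot\fv_r = -k_p\vv - k_v\dot\vv$. We substitute $m\dot\vv$ from the closed loop:
\begin{equation*}
m\dot\fv_r = -k_pm\vv + k_v k_p\ev_p + k_v^2\vv - k_v\Xm - k_v\Pm_\star^\perp\Rm\Bm\Cm^{-1}\tauv_r .
\end{equation*}
The velocity terms combine into $(k_v^2/m - k_p)m\vv$, which explains the absolute value in $\Wm_{21}$.

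The remaining pieces are bounded as in Lemma~\ref{lem:vdot1bound}:
\begin{itemize}
\item $\|\Xm\|\le(k_p\|\ev_p\|+k_v\|\vv\|+mg)\|\ev_R\|$ by \eqref{eq:u13};
\item the spurious term uses $\|\Pm_\star^\perp\Rm\Bm\Cm^{-1}\|\le\gamma$ and $\|\Omegav^\times\Jm\Omegav - k_R\ev_R - k_\Omega\Omegav\|\le(\beta+k_\Omega)\|\Omegav\| + k_R\|\ev_R\|$.
\end{itemize}
This yields
\begin{equation*}
\|\Omegav_d\|\le\alpha\Big[k_pk_v(1+e_R^{\max})\|\ev_p\| + m\big(|\tfrac{k_v^2}{m}-k_p|+\tfrac{k_v^2}{m}e_R^{\max}\big)\|\vv\| + k_v(mg+k_R\gamma)\|\ev_R\| + k_v\gamma(\beta+k_\Omega)\|\Omegav\|\Big].
\end{equation*}
Here $e_R^{\max}$ absorbs products such as $\|\ev_p\|\|\ev_R\|$ and $\|\vv\|\|\ev_R\|$ into terms linear in $\zv_1$.

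\textbf{Step 4: assemble.} We multiply the bound from Step 3 by $(k_R\|\ev_R\|+c_2\|\Omegav\|)$ and distribute.
\begin{itemize}
\item The $\zv_1$ parts give exactly the entries of $\zv_1^\top\Wm_{21}\zv_2$.
\item The $\zv_2$ parts give the diagonal degradations $k_Rk_v\alpha(mg+k_R\gamma)$ and $c_2k_v\gamma\alpha(\beta+k_\Omega)$.
\item The off-diagonal mixed terms, collected together with the nominal $c_2k_\Omega/\lambda_{\min}(\Jm)$ cross term, give $\sigma_{R,\Omega}$.
\end{itemize}
Writing the cross coefficient symmetrically as $-\sigma_{R,\Omega}/2$ completes $\Wm_2$.

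We expect the main obstacle to be the bookkeeping in Step 3. The bound must be written purely in terms of $\zv_1,\zv_2$ norms, linearly, so that a quadratic form results. That requires careful use of the domain bounds $e_R^{\max}$ and $\Omega^{\max}$ to kill higher-order products while keeping the coefficients consistent with \eqref{eq:W2}--\eqref{eq:W21}.
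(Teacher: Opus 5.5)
Your proposal is correct and follows the paper's proof essentially step by step. It uses the same expression for $\dot V_2$ and the same nominal bounds via $\|\Cm(\Rm_d^\top\Rm)\|\le 1$ and the spectral bounds of $\Jm$. It also obtains the same estimate of $\|\Omegav_d\|$ from Lemma~\ref{lem:omega_d_bound}, by substituting the closed-loop $m\dot{\vv}$ into $\dot{\fv}_r$ and absorbing products with $e_R^{\max}$ and $\Omega^{\max}$.

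One small remark: carrying out your Step~4 literally gives the cross coefficient $c_2\bigl(k_\Omega/\lambda_{\min}(\Jm) + k_v\alpha(mg+k_R\gamma)\bigr) + k_R k_v\gamma\alpha(\beta+k_\Omega)$, and the paper's own intermediate bound \eqref{eq:V2_dot_long} shows the same. The stated $\sigma_{R,\Omega}$ has an extra factor $\gamma$ in its $c_2$-term, which looks like a typo; the stated bound still follows from the derived one only when $\gamma\ge 1$.
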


    \begin{pf}
        The time derivative of $V_2$ along the closed-loop trajectories is given by
        \begin{align}\label{eq:V2_dot}
            \dot{V}_2 = &-k_\Omega \|\Omegav\|^2 + c_2 \Omegav^\top \Cm(\Rm_d^\top \Rm)\Omegav - c_2 k_R \ev_R^\top \Jm^{-1} \ev_R \notag \\
            &- c_2 k_\Omega \ev_R^\top \Jm^{-1} \Omegav  - c_2 \Omegav^\top \Cm(\Rm_d^\top \Rm)\Rm^\top \Rm_d \Omegav_d \notag \\
            &- k_R \ev_R^\top \Rm^\top \Rm_d \Omegav_d.
        \end{align} 
        To derive an upper bound, we bound each term using the Cauchy-Schwarz inequality, orthogonality of rotation matrices, positive definiteness of $\Jm$, and the following relations 
        \begin{gather}
            \|\Cm(\Rm_d^\top \Rm)\| \;\le\; 1,\label{eq:u21}\\
            -c_2k_R\,\ev_R^\top \Jm^{-1}\ev_R \;\le\; -\frac{c_2k_R}{\lambda_{\max}(\Jm)}\|\ev_R\|^2,\label{eq:u22}\\
            -c_2k_\Omega\,\ev_R^\top \Jm^{-1}\Omegav \;\le\; \frac{c_2k_\Omega}{\lambda_{\min}(\Jm)}\|\ev_R\|\,\|\Omegav\|,\label{eq:u23}
        \end{gather}
        where \eqref{eq:u21} follows from \eqref{eq:bound_C}.
    
        Collecting these estimates yields
        \begin{align}
            \dot{V}_2 \le\;&
            -(k_\Omega - c_2)\|\Omegav\|^2
            - c_2\,\frac{k_R}{\lambda_{\max}(\Jm)}\|\ev_R\|^2 \nonumber\\
            &+ c_2\,\frac{k_\Omega}{\lambda_{\min}(\Jm)}\|\ev_R\|\,\|\Omegav\|\nonumber\\
            &+ c_2\|\Omegav\|\,\|\Omegav_d\| + k_R\|\ev_R\|\,\|\Omegav_d\|.
            \label{eq:V2_dot_intermediate}
        \end{align}
        To further bound the terms involving $\Omegav_d$, we refine the estimate \eqref{eq:omega_d_bound} by making explicit the dependence of $\dot{\fv}_r$ on the translational errors, the auxiliary term $\Xm$, and the reference torque $\tauv_r$. In particular,
        \begin{align}
            \dot{\fv}_r
            &= \frac{1}{m}\Km_v\Km_p \ev_p + \Bigl(\frac{1}{m}\Km_v^{2} - \Km_p\Bigr)\vv\nonumber\\
            &- \frac{1}{m}\Km_v \Xm - \frac{1}{m}\Km_v \Pm_\star^\perp \Rm \Bm \Cm^{-1} \tauv_r.
        \end{align}
        Using the triangle inequality and submultiplicativity of norms, together with \eqref{eq:u11}, we obtain
        \begin{equation}\label{eq:fdot_bound}
            \|\dot{\fv}_r\|
            \le
            \frac{k_v k_p}{m}\|\ev_p\|
            + \Bigl|\frac{k_v^{2}}{m} - k_p\Bigr|\|\vv\|
            + \frac{k_v}{m}\|\Xm\|
            + \frac{k_v}{m}\gamma\|\tauv_r\|.
        \end{equation}
        Substituting \eqref{eq:fdot_bound}, \eqref{eq:u12}, and \eqref{eq:u13}, and using \eqref{eq:fr_lower_bound} in \eqref{eq:omega_d_bound} yields the bound 
        \begin{equation}\label{eq:u26}
            \begin{aligned}
            \|\Omegav_d\|
            \le\;&
            \left(1 + \frac{\delta}{1 - \delta^2}\right)
            \Biggl[
            \frac{k_p k_v}{m\,\underline f}(1 + e_R^{\max})\,\|\ev_p\| \\
            &+ \frac{1}{\underline f}
            \left(\left|\frac{k_v^2}{m} - k_p\right|
                  + \frac{k_v^2}{m} e_R^{\max}\right)\|\vv\| \\
            &+ \frac{k_v}{\underline f}\left(g + \frac{\gamma k_R}{m}\right)\|\ev_R\| \\
            &+ \frac{k_v}{m\,\underline f}\gamma
            \bigl(\lambda_{\max}(\Jm)\Omega^{\max} + k_\Omega\bigr)\|\Omegav\|
            \Biggr].
            \end{aligned}
        \end{equation}
        Substituting \eqref{eq:u26} into \eqref{eq:V2_dot_intermediate} and regrouping quadratic and cross terms yields
        \begin{equation}\label{eq:V2_dot_long}
        \resizebox{\columnwidth}{!}{$
        \begin{aligned}
            \dot V_2 \le\;& -\Bigl[k_\Omega\! -\! c_2\Bigl(1\!+\!\Bigl(1\!+\!\frac{\delta}{1\!-\!\delta^2}\Bigr)\frac{k_v}{m\underline f}\gamma(\lambda_{\max}(\Jm)\Omega^{\max}\!+\!k_\Omega)\Bigr)\Bigr]\|\Omegav\|^2 \\
            & + \Bigl[\Bigl(1 + \frac{\delta}{1\!-\!\delta^2}\Bigr)\frac{k_R k_v}{\underline f}\Bigl(g + \frac{k_R \gamma}{m}\Bigr) - c_2\frac{k_R}{\lambda_{\max}(\Jm)}\Bigr]\|\ev_R\|^2 \\
            & + \Biggl\{ c_2\Bigl[\frac{k_\Omega}{\lambda_{\min}(\Jm)} + \Bigl(1 + \frac{\delta}{1\!-\!\delta^2}\Bigr)\frac{k_v}{\underline f}\Bigl(g + \frac{k_R \gamma}{m}\Bigr)\Bigr] \\
            & \qquad + k_R\Bigl(1 + \frac{\delta}{1\!-\!\delta^2}\Bigr)\frac{k_v}{m\underline f}\gamma(\lambda_{\max}(\Jm)\Omega^{\max} + k_\Omega) \Biggr\}\|\ev_R\|\,\|\Omegav\| \\
            & + \Bigl(1 + \frac{\delta}{1\!-\!\delta^2}\Bigr)\frac{k_R k_p k_v}{m\underline f}(1 + e_R^{\max})\|\ev_R\|\,\|\ev_p\| \\
            & + \Bigl(1 + \frac{\delta}{1\!-\!\delta^2}\Bigr)\frac{k_R}{\underline f}\Bigl(\Bigl|\frac{k_v^2}{m} - k_p\Bigr| + \frac{k_v^2}{m}e_R^{\max}\Bigr)\|\ev_R\|\,\|\vv\| \\
            & + c_2\Bigl(1 + \frac{\delta}{1\!-\!\delta^2}\Bigr)\frac{k_p k_v}{m\underline f}(1 + e_R^{\max})\|\Omegav\|\,\|\ev_p\| \\
            & + c_2\Bigl(1 + \frac{\delta}{1\!-\!\delta^2}\Bigr)\frac{1}{\underline f}\Bigl(\Bigl|\frac{k_v^2}{m} - k_p\Bigr| + \frac{k_v^2}{m}e_R^{\max}\Bigr)\|\Omegav\|\,\|\vv\|.
        \end{aligned}
        $}
        \end{equation}
        where $\zv_1=[\|\ev_p\|\;\|\vv\|]^\top$.
        Using the shorthands \eqref{eq:alpha_beta} and \eqref{eq:sigma_Romega}, and
        collecting homogeneous terms, we finally obtain the compact quadratic bound
        \eqref{eq:V2_dot_bound}.
    \end{pf}

\subsection{Stability of the Error System}
    The following theorem establishes the local exponential stability of the equilibrium $\xv^\circ = \bm{0}$, defined in Problem~\ref{prob:main}, under the control law~\eqref{eq:utau}--\eqref{eq:uf}. The proof leverages the properties derived in Lemmas~\ref{lem:v1bound}-\ref{lem:vdot2bound}.

    \begin{thm}\label{thm:stability}
    Consider the closed-loop system in \eqref{eq:CL_stability3}. 
    Let $\mathcal{D}\subset\mathbb{R}^{12}$ be the compact analysis domain defined in \eqref{eq:region_stability}, and let $V:\mathcal{D}\to\mathbb{R}$ be the Lyapunov candidate defined in \eqref{eq:V}.  
    
    If the following conditions hold:
    \begin{enumerate}
        \item The control gains $k_p, k_v, k_R, k_{\Omega}$ and the scalar design parameters $c_1,c_2$ satisfy:
        \begin{subequations}\small
                \begin{align}
                    c_1 &< \min \Biggl\{\sqrt{m k_p},\;k_v(1 - e_R^{\max}),\nonumber\\[-1ex]
                    &\hspace{1.5cm}\frac{4 k_p k_v m (e_R^{\max}-1)^2}{k_v^2 (e_R^{\max}+1)^2 + 4 k_p m (1 - e_R^{\max})}\Biggr\}, \label{eq:c1}\\[4pt]
                    c_2 &< \min \Bigl\{\sqrt{\lambda_{\min}(\Jm) k_R},\;\tfrac{k_{\Omega}}{1 + k_v \gamma \alpha (\beta + k_{\Omega})},\;c_{2,+}\Bigr\}, \label{eq:c2_max}\\[4pt]
                    c_2 &> \max \Bigl\{k_v \lambda_{\max}(\Jm)\alpha(mg + k_R \gamma),\;c_{2,-}\Bigr\}, \label{eq:c2_min}
                \end{align}
            \end{subequations}
        where $c_{2,-}$ and $c_{2,+}$ are the two positive solutions of the quadratic inequality in $c_2$ obtained from enforcing the positive definiteness of $\Wm_2$, and where $\alpha$ and $\beta$ are defined in \eqref{eq:alpha_beta}.
        \item The matrices $\Wm_{12}, \Wm_{21}, \Wm_1$, and $\Wm_2$ as in \eqref{eq:W12}, \eqref{eq:W21}, \eqref{eq:W1} and \eqref{eq:W2} respectively satisfy: 
        \begin{equation}
            \|\Wm_{12}+\Wm_{21}\|^2 < 4\lambda_{\min}(\Wm_1)\lambda_{\min}(\Wm_2).
            \label{eq:W_condition}
        \end{equation}
    \end{enumerate}
    
    Then, there exists a constant $c>0$ such that the sublevel set
    \begin{equation}
    \mathcal{D}_0 := \{\xv\in\mathcal{D}\mid V(\xv)\le c\}
    \label{eq:D0_def}
    \end{equation}
    is forward invariant and satisfies $\mathcal{D}_0 \subset \mathcal{D}$.
    
    Consequently, the equilibrium $\xv^\circ$ is locally exponentially stable and $\mathcal{D}_0$ represents an estimate of the region of attraction.
    \end{thm}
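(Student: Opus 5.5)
The proof combines Lemmas~\ref{lem:v1bound}--\ref{lem:vdot2bound} into a single quadratic sandwich for $V$ and a quadratic decay bound for $\dot V$, valid on $\mathcal{D}$. It then closes with a standard sublevel-set argument.

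\textbf{Positive definiteness of $V$.} Condition \eqref{eq:c1} gives $c_1<\sqrt{mk_p}$, so $\Mm_{11}\succ 0$ and $\Mm_{12}\succ 0$. Condition \eqref{eq:c2_max} gives $c_2<\sqrt{\lambda_{\min}(\Jm)k_R}$, so $\Mm_{21}\succ 0$. Since $2/(2-\psi)\ge 1$, $\Mm_{22}\succ0$ as well. Setting $\zv=[\zv_1^\top\ \zv_2^\top]^\top$, Lemmas~\ref{lem:v1bound}--\ref{lem:v2bound} give
\begin{equation*}
\underline{\lambda}\,\|\zv\|^2\le V\le \overline{\lambda}\,\|\zv\|^2,
\end{equation*}
with $\underline\lambda=\tfrac12\min\{\lambda_{\min}(\Mm_{11}),\lambda_{\min}(\Mm_{21})\}$ and $\overline\lambda=\tfrac12\max\{\lambda_{\max}(\Mm_{12}),\lambda_{\max}(\Mm_{22})\}$. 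Since $\|\zv\|^2=\|\ev_p\|^2+\|\vv\|^2+\|\ev_R\|^2+\|\Omegav\|^2$, this is a genuine norm bound on the error state.

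\textbf{Decay of $V$.} Adding \eqref{eq:V1_dot_bound} and \eqref{eq:V2_dot_bound} gives
\begin{equation*}
\dot V\le -\zv_1^\top\Wm_1\zv_1-\zv_2^\top\Wm_2\zv_2+\zv_1^\top(\Wm_{12}+\Wm_{21})\zv_2.
\end{equation*}
I will check that the gain conditions make $\Wm_1,\Wm_2\succ 0$. For $\Wm_1$, the constraints $c_1<k_v(1-e_R^{\max})$ and the third entry in \eqref{eq:c1} give a positive $(2,2)$ entry and a positive determinant; the latter is a rearrangement of $\det\Wm_1>0$ as an inequality in $c_1$. For $\Wm_2$, \eqref{eq:c2_min} makes the $(1,1)$ entry positive, the second bound in \eqref{eq:c2_max} makes the $(2,2)$ entry positive, and $c_{2,-}<c_2<c_{2,+}$ is exactly $\det\Wm_2>0$. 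The cross term is bounded by $\|\Wm_{12}+\Wm_{21}\|\,\|\zv_1\|\,\|\zv_2\|$, so
\begin{equation*}
\dot V\le -\begin{bmatrix}\|\zv_1\|\\ \|\zv_2\|\end{bmatrix}^{\!\top}\!\!\begin{bmatrix}\lambda_{\min}(\Wm_1)&-\tfrac12\|\Wm_{12}+\Wm_{21}\|\\ -\tfrac12\|\Wm_{12}+\Wm_{21}\|&\lambda_{\min}(\Wm_2)\end{bmatrix}\!\begin{bmatrix}\|\zv_1\|\\ \|\zv_2\|\end{bmatrix}.
\end{equation*}
Condition \eqref{eq:W_condition} is precisely the determinant condition making this $2\times2$ matrix positive definite. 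Calling its smallest eigenvalue $\lambda_W>0$, we get $\dot V\le-\lambda_W\|\zv\|^2\le -(\lambda_W/\overline\lambda)V$ on $\mathcal{D}$.

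\textbf{Invariance and exponential stability (main obstacle).} All the bounds above hold only while the trajectory stays in $\mathcal{D}$. This is where the lemma estimates used $e_p^{\max}$, $\Omega^{\max}$, $e_R^{\max}$, the condition $|\hat\rv_1^\top\wv_3|\le\delta$, and $\|\fv_r\|\ge\underline f$. Because these bounds are local, invariance must be argued rather than assumed. I choose
\begin{equation*}
c<\underline\lambda\,\min\{(e_p^{\max})^2,(v^{\max})^2,(e_R^{\max})^2,(\Omega^{\max})^2\}.
\end{equation*}
Then $V\le c$ forces every component of $\zv$ strictly inside the box, so $\mathcal{D}_0$ lies in the interior of $\mathcal{D}$.

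There is a subtlety with the attitude part. The bound $\|\ev_R\|\le e_R^{\max}$ is only meaningful together with $(\Rm,\Rm_d)\in\mathcal{L}_\psi$. I will handle this using $k_R\Psi\le V_2\le V$ together with a choice $c<k_R\psi$, so that $\Psi$ stays below $\psi$ and we never leave $\mathcal{L}_\psi$.

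Forward invariance then follows by contradiction. Suppose a trajectory starting in $\mathcal{D}_0$ first reaches the boundary of $\mathcal{D}_0$ at a time $t^*$. On $[0,t^*]$ it stays in $\mathcal{D}$, so $\dot V<0$ there (away from the origin), which gives $V(t^*)<V(0)\le c$. This contradicts reaching the boundary level $V=c$. Finally, the comparison lemma gives $V(t)\le V(0)e^{-(\lambda_W/\overline\lambda)t}$, and hence
\begin{equation*}
\|\zv(t)\|\le\sqrt{\overline\lambda/\underline\lambda}\,\|\zv(0)\|\,e^{-\lambda_W t/(2\overline\lambda)}.
\end{equation*}
This is local exponential stability of $\xv^\circ$, with $\mathcal{D}_0$ as an estimate of the region of attraction.
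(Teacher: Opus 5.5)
Your proof follows the paper's route: the quadratic sandwich from Lemmas~\ref{lem:v1bound}--\ref{lem:v2bound}, then $\Wm_1,\Wm_2\succ0$ together with \eqref{eq:W_condition} giving $\dot V\le-\lambda\|\zv\|^2$ on $\mathcal{D}$, then a sublevel set of $V$ inside $\mathcal{D}$. You are more careful than the paper in choosing $c$ explicitly and in noticing that $\|\ev_R\|\le e_R^{\max}$ alone does not keep $(\Rm,\Rm_d)$ in $\mathcal{L}_\psi$.

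One slip needs fixing: $k_R\Psi\le V_2$ is false in general, because $c_2\ev_R^\top\Omegav$ can be negative. Instead, using $\|\ev_R\|^2\le 2\Psi$ and minimizing over $\|\Omegav\|$ gives $V_2\ge\bigl(k_R-c_2^2/\lambda_{\min}(\Jm)\bigr)\Psi$. This coefficient is positive by the first bound in \eqref{eq:c2_max}, so you should take $c<\bigl(k_R-c_2^2/\lambda_{\min}(\Jm)\bigr)\psi$.
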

    \begin{pf}
        We establish local exponential stability by analyzing the candidate Lyapunov function $V$ and its time derivative along the closed-loop trajectories.
        From Lemmas~\ref{lem:v1bound} and~\ref{lem:v2bound}, the Lyapunov function $V$ admits the quadratic bounds
        \begin{equation}
            \frac{1}{2}\zv^\top \Mm_{1}\zv \;\le\; V(\xv) \;\le\; \frac{1}{2}\zv^\top \Mm_{2}\zv,
        \end{equation}
        where  $\Mm_{j}=\mathrm{blkdiag}\!\left(\{\Mm_{kj}\}_{k=1}^{2}\right)$, $j=1,2$.     
        The matrices $\Mm_{1}$ and $\Mm_{2}$ are positive definite provided that
    \begin{equation}\label{eq:V_bounds_conditions}
            \begin{aligned}
                c_1 &< \sqrt{m\,k_p}, \qquad
                c_2 < \sqrt{\lambda_{\min}(\Jm)\,k_R},\\
                c_2 &< \sqrt{\lambda_{\max}(\Jm)\,k_R\,\frac{2}{2-\psi}}.
            \end{aligned}
        \end{equation}
        Next, differentiating $V$ along the trajectories of the closed-loop system~\eqref{eq:CL_stability3} and applying Lemmas~\ref{lem:vdot1bound} and~\ref{lem:vdot2bound}, we obtain
        \begin{equation}
            \dot V(\xv) \;\le\; -\,\zv^\top \Wm \zv,
            \end{equation}
        where
    \begin{equation}\label{eq:Vdot_bound}
            \Wm =
            \begin{bmatrix}
            \Wm_1 & -\tfrac{1}{2}\!\left(\Wm_{12}+\Wm_{21}\right)\\[2pt]
            -\tfrac{1}{2}\!\left(\Wm_{12}+\Wm_{21}\right) & \Wm_2
            \end{bmatrix}.
        \end{equation}
        The matrix $\Wm$ is positive definite if $\Wm_1 \succ 0$, $\Wm_2 \succ 0$, and condition~\eqref{eq:W_condition} holds. 
        The positive definiteness of the diagonal blocks is ensured by the following inequalities:
        \begin{subequations}
            \begin{align}
                c_1 &< k_v(1-e_R^{\max}),\label{eq:c_11}\\
                c_1 &< \frac{4k_p k_v m (e_R^{\max}-1)^2}
                           {k_v^2 (e_R^{\max}+1)^2 + 4k_p m (1-e_R^{\max})},\\[2pt]
                c_2 &> k_v\,\lambda_{\max}(\Jm)\,\alpha\bigl(mg+k_R\gamma\bigr),\\
                c_2 &< \frac{k_{\Omega}}{1+k_v\gamma\alpha(\beta+k_{\Omega})},\\
                \det(\Wm_2) &> 0 \quad \Leftrightarrow \quad c_{2,-}<c_2<c_{2,+}.
                \label{eq:W2_pd}
            \end{align}
        \end{subequations}
        Condition~\eqref{eq:W2_pd} reduces to the scalar quadratic inequality
    \begin{equation}\label{eq:parabola}
            - A\,c_2^{2} + B\,c_2 - C > 0 , 
        \end{equation}
        where the coefficients $A$, $B$, and $C$ are given by
        \begin{align}
            A &= \frac{1}{4}\Biggl(
                  \frac{k_{\Omega}}{\lambda_{\min}(\Jm)}
                  + k_v\gamma\alpha(mg+k_R\gamma)
                 \Biggr)^{\!2}
                 \notag\\
              &
                 + \frac{k_R}{\lambda_{\max}(\Jm)}
                   \Bigl[1+k_v\gamma\alpha(\beta+k_{\Omega})\Bigr],
            \label{eq:Acoeff}\\[3pt]
            B &= k_R k_v \alpha(mg+k_R\gamma)
                 \Bigl[1+k_v\gamma\alpha(\beta+k_{\Omega})\Bigr] + \frac{k_R k_{\Omega}}{\lambda_{\max}(\Jm)}
                 \label{eq:Bcoeff}\\
              &
                 - \!\frac{1}{2}
                   \!\Biggl(
                   \frac{k_{\Omega}}{\lambda_{\min}(\Jm)}
                   \!+\! k_v\gamma\alpha(mg\!+\!k_R\gamma)
                   \Biggr)
                   \Bigl[k_R k_v\gamma\alpha(\beta\!+\!k_{\Omega})\Bigr],
            \notag\\[3pt]
            C &= k_R k_v \alpha
                 \Biggl(
                 \frac{1}{4}k_R k_v \gamma^{2}\alpha(\beta+k_{\Omega})^{2}
                 + k_{\Omega}(mg+k_R\gamma)
                 \Biggr).
            \label{eq:Ccoeff}
        \end{align}
        Since $A>0$, the parabola opens downward, and feasibility requires a positive discriminant, $B^{2}-4AC>0$. 
        Accordingly, all admissible values of $c_2$ lie strictly between the two positive roots $c_{2,-}$ and $c_{2,+}$.
    
        Therefore, selecting any $c_2\in(c_{2,-},\,c_{2,+})$ guarantees $\Wm_2\succ 0$ and, together with \eqref{eq:W_condition}, implies $\Wm\succ 0$. 
        Collecting \eqref{eq:W_condition}, \eqref{eq:V_bounds_conditions} and \eqref{eq:c_11}-\eqref{eq:W2_pd} yields the parameter conditions stated in the theorem.
        
       Since $\dot V \le -\lambda_{\min}(\Wm)\|\zv\|^2$ holds on the domain $\mathcal{D}$, standard Lyapunov arguments ensure local exponential stability of the equilibrium $\xv^\circ$. 
       Moreover, since $\dot V(\xv)\le 0$ holds for all $\xv\in\mathcal D$ (under the above parameter conditions), any sublevel set of $V$ that is contained in $\mathcal D$ is forward invariant.
        By continuity of $V$ and because $\xv^\circ \in \mathrm{int}(\mathcal{D})$, there exists a constant $c>0$ such that the sublevel set $\mathcal{D}_0$ in \eqref{eq:D0_def} is forward invariant and satisfies $\mathcal{D}_0 \subset \mathcal{D}$.
    \end{pf}
    
    \begin{rem}
        We highlight a fundamental difference from standard geometric approaches (e.g., \cite{2010-LeeLeoMcc}), which typically rely on a \emph{cascade argument} where attitude stability is established autonomously. In our setting, this separation principle fails: spurious forces intrinsically couple the dynamics, preventing independent stability characterizations. Consequently, the proof cannot proceed sequentially but requires a strict Lyapunov function for the \emph{composite} system to certify the joint convergence of both subsystems.
    \end{rem}

\section{Conclusions}
\label{sect:conclusions}

This work provides the first rigorous stability certification for the hovering control problem of floating rigid bodies subject to intrinsic force--moment coupling. While the control strategy proposed by Michieletto \emph{et al.}~\cite{2017f-MicRylFra,2018a-MicRylFra} has demonstrated remarkable robustness in experimental studies, a formal theoretical justification was previously lacking. By analyzing a canonical model that abstracts away the specific actuator topology, we established local exponential stability of the hovering equilibrium for a broad class of coupled underactuated systems.

The proposed Lyapunov-based analysis explicitly accounts for the structural challenges introduced by spurious forces, which invalidate standard cascade and skew-symmetry arguments commonly exploited in geometric control of multirotor vehicles. The resulting stability proof yields explicit algebraic conditions on the controller gains, thereby providing verifiable and practically meaningful design guidelines for safe hovering stabilization in the presence of intrinsic translational--rotational coupling.

Future research directions include reducing the conservatism of the derived bounds and exploiting the additional degrees of freedom available in platforms admitting a decoupled plane (D2) to improve transient performance. Another important extension concerns the trajectory tracking problem. While the present analysis is deliberately restricted to static hovering equilibria, the structure of the proposed controller suggests that a Lyapunov-based stability analysis for time-varying reference trajectories may be feasible, albeit at the cost of additional technical complexity. Establishing formal stability guarantees for such tracking objectives in the presence of force--moment coupling represents a natural and promising direction for future work.

 \bibliographystyle{plain}
 \bibliography{Settings/Bib/bibAlias,Settings/Bib/bibAlias_short,Settings/Bib/custom,Settings/Bib/bibAF,Settings/Bib/bibMain,Settings/Bib/bibNew}


    \appendix

    \section{Appendix}\label{appendix}
We collect several identities used throughout the work.

\subsection{Time Derivative of the Attitude Tracking Error}\label{appendix:e_R_dot}
We follow the standard derivation in \cite{2010-LeeLeoMcc}, adapting notation. Starting from
\begin{equation}\label{eq:attitude_error_repeat}
    \ev_R \coloneqq \tfrac12\left[\Rm_d^\top \Rm - \Rm^\top \Rm_d\right]_\vee,
\end{equation}
and exploiting $\tfrac{d}{dt}(\Rm_d^\top \Rm)=\Rm_d^\top \Rm\,\ev_\omega^\times$ together with $\ev_\omega=\Omegav-\Rm^\top\Rm_d\Omegav_d$, we obtain
\begin{align}
    \dot{\ev}_R &= \tfrac12\!\left(\Rm_d^\top \Rm\,\ev_\omega^\times + \ev_\omega^\times \Rm^\top \Rm_d\right)_\vee \nonumber 
                \\&= \tfrac12\left(\operatorname{Tr}[\Rm^\top \Rm_d]\mathbf{I}_3 - \Rm^\top \Rm_d\right)\ev_\omega,
\end{align}
where the identity \( (\Am\xv^\times + \xv^\times \Am^\top)_\vee = (\operatorname{Tr}[\Am]\mathbf I_3 - \Am)\xv \), valid for all \( \Am\in\mathbb R^{3\times 3} \) and \( \xv\in\mathbb R^3 \), is used. Defining
\begin{align}\label{eq:C_repeat}
    \Cm(\Rm_d^\top \Rm) \coloneqq \tfrac12\left(\operatorname{Tr}[\Rm^\top \Rm_d]\mathbf{I}_3 - \Rm^\top \Rm_d\right),
\end{align}
gives the compact expression
\begin{align}\label{eq:e_R_dot_repeat}
    \dot{\ev}_R = \Cm(\Rm_d^\top \Rm)\,\ev_\omega.
\end{align}
We now characterize $\Cm(\Rm_d^\top \Rm)$. Let $\Rm_d^\top \Rm = \exp(\xv^\times)$ with $\theta=\|\xv\|\in[0,\pi)$ and $\hat{\xv}=\xv/\theta$. Using Rodrigues' formula,
\begin{equation}
\resizebox{\columnwidth}{!}{$
\Cm(\exp(\xv^\times)) =
\tfrac12\left(2\cos\theta\,\mathbf{I}_3 + \sin\theta\,\hat{\xv}^\times - (1 - \cos\theta)(\hat{\xv}^\times)^2\right).
$}
\end{equation}
To bound its Euclidean operator norm, consider $\Cm^\top \Cm$, which simplifies to
\begin{equation}
    \Cm^\top \Cm = \cos^2\theta\,\mathbf{I}_3 - \left(\tfrac12 + \tfrac12\cos\theta - \cos^2\theta\right)(\hat{\xv}^\times)^2.
\end{equation}
The eigenvalues are $
    \lambda_1 = \cos^2\theta,$ and \ $\lambda_{2,3} = \tfrac12(1+\cos\theta).$
Hence
\begin{equation}\label{eq:bound_C}
    \|\Cm(\Rm_d^\top \Rm)\| = \sqrt{\lambda_{\max}(\Cm^\top \Cm)} \le 1,\;\;\; \forall\,\Rm_d,\Rm \in \SO.
\end{equation}

\subsection{Maximum attitude error}\label{appendix:eR_psi}
    To derive \eqref{eq:er_max}, let us write $\Rm_d^\top \Rm = \exp(\xv^\times)$. Using $\operatorname{Tr}[\exp(\xv^\times)] = 1+2\cos\theta$, we obtain
    \begin{equation}
        \Psi(\Rm,\Rm_d)=\tfrac12\bigl(3-\operatorname{Tr}[\Rm_d^\top \Rm]\bigr)=1-\cos\theta.
    \end{equation}
    By the definition of $\ev_R$, and standard $\SO$ identities,
    \begin{equation}\label{eq:relation_er_psi}
        \|\ev_R\|^2 = \sin^2\theta = (1-\cos\theta)(1+\cos\theta)=\Psi(2-\Psi),
    \end{equation}
    hence
    \begin{equation}\label{eq:e_R_norm}
        \|\ev_R\| = \sqrt{\Psi(\Rm,\Rm_d)\bigl(2-\Psi(\Rm,\Rm_d)\bigr)}.
    \end{equation}
    Therefore, for all $(\Rm,\Rm_d)\in\mathcal L_\psi$ (i.e., whenever $\Psi(\Rm,\Rm_d)\le\psi$), \eqref{eq:e_R_norm} yields the bound \eqref{eq:er_max}.

\end{document}